\documentclass{article}

\usepackage[preprint]{neurips_2026}

\usepackage[utf8]{inputenc} 
\usepackage[T1]{fontenc}    
\usepackage{hyperref}       
\usepackage{url}            
\usepackage{booktabs}       
\usepackage{amsfonts}       
\usepackage{nicefrac}       
\usepackage{microtype}      
\usepackage{xcolor}         
\usepackage{amsmath}
\usepackage{amssymb}
\usepackage{mathtools}
\usepackage{amsthm}
\usepackage{placeins}
\usepackage{algorithm}
\usepackage{algpseudocode}
\usepackage{graphicx}
\usepackage{makecell}
\usepackage{subcaption}
\usepackage{listings}
\usepackage{xcolor}

\lstdefinestyle{promptstyle}{
    backgroundcolor=\color{gray!10},   
    basicstyle=\ttfamily\small,        
    breakatwhitespace=false,         
    breaklines=true,                   
    captionpos=b,                    
    keepspaces=true,                 
    numbers=none,                      
    keywordstyle=\color{blue},       
    stringstyle=\color{green!70!black},
    commentstyle=\color{gray}
}
\theoremstyle{plain}
\newtheorem{theorem}{Theorem}[section]

\newtheorem{lemma}[theorem]{Lemma}

\theoremstyle{definition}

\newtheorem{assumption}[theorem]{Assumption}
\theoremstyle{remark}

\title{ATLAS: A Multi-LLM Training Framework for EvoDPO with Adaptive Reference Evolution}

\author{
Ujin Jeon \\
School of Electrical and Computer Engineering \\
Purdue University \\
West Lafayette, United States \\
\texttt{ujeon@purdue.edu}
\And
Jiyong Kwon \\
School of Mechanical Engineering \\
Purdue University \\
West Lafayette, United States \\
\texttt{kwon165@purdue.edu}
\And
Madison Ann Sullivan \\
Department of Mathematics \\
Purdue University \\
West Lafayette, United States \\
\texttt{sulli434@purdue.edu}
\And
Caleb Eunho Lee \\
Department of Computer Science \\
Purdue University \\
West Lafayette, United States \\
\texttt{lee4929@purdue.edu}
\And
Guang Lin\thanks{Corresponding author: \texttt{guanglin@purdue.edu}} \\
Department of Mathematics and Mechanical Engineering \\
Purdue University \\
West Lafayette, United States \\
\texttt{guanglin@purdue.edu}
}

\begin{document}

\maketitle

\begin{abstract}
Recent multi-LLM agent systems have shown promising capabilities for automated problem-solving, yet they predominantly rely on frozen agents or static fine-tuning pipelines. To address this limitation, our primary contribution is ATLAS (Adaptive Task-distributed Learning for Agentic Self-evolution), a multi-agent framework where specialized meta-agents collaboratively train and refine an active agent toward a domain-specific policy. A core challenge in iterative preference learning within these pipelines is the reliance on fixed reference models, which typically leads to overly conservative updates or training stagnation. To overcome this, the framework's algorithmic engine utilizes Evolving Direct Preference Optimization (EvoDPO). EvoDPO employs an inspection agent to perform adaptive, proxy-KL gated reference policy updates based on continuous training telemetry. We evaluate this full framework across a diverse set of challenging environments—including non-stationary contextual bandits, partial differential equations (PINNs), and combinatorial optimization tasks (TSP, Bin Packing). Through comparison against fixed-reference, adaptive-reference, and external automated-discovery baselines, our results suggest that ATLAS combines supporter-driven exploration with EvoDPO-driven stability to improve long-horizon evaluator-driven self-improvement.

\end{abstract}

\section{Introduction}

Fine-tuned large language model (LLM) agents are broadly used as components in agentic systems for complex problem solving, scientific computing, and code-generation workflows. Recent frameworks show that collections of specialized agents, combined with experience banks, routing, or collaborative coordination, improve task performance and reliability by optimizing search, planning, decision-making, strategies, or data generation \cite{liu2025lessonslearnedmultiagentframework,jiang2025agenticscimlcollaborativemultiagentsystems,wu2025automatedcodedevelopmentpde,he2025langpinnlanguagephysicsinformedneural,chen2025warriormathenhancingmathematicalability}. In parallel, multi-agent infrastructure and benchmarking efforts have made it easier to orchestrate such systems and study their behaviors at scale \cite{chen2023agentversefacilitatingmultiagentcollaboration,wu2023autogenenablingnextgenllm,chen2024autoagentsframeworkautomaticagent,huang2025competing,becker2025mallmmultiagentlargelanguage}.

Despite these advances, many existing approaches predominantly treat LLM agents as frozen optimizers and focus on improving optimization efficiency rather than explicitly studying how to develop an LLM agent through iterative model updates under realistic non-stationary and long-horizon training. To bridge this gap, our work introduces a framework for robust self-evolution via controlled reference management.

In this work, we propose ATLAS (Adaptive Task-distributed Learning for Agentic Self-evolution), a framework targeting specialized Evolving Agent formation via systematic model development. Rather than relying on a single static agent, we fine-tune a set of supporter LLM agents that collectively cultivate and refine a domain-specific Evolving Agent. The supporter agents are assigned complementary roles that support the learning dynamics: (i) exploration support to suggest diverse exploration strategies (ii) fine-tuning supervision to stabilize preference-based updates and prevent excessive deviation, and (iii) reference-policy inspection to decide when and how to renew the reference policy for well-aligned adaptation. By explicitly separating these tasks, the training loop prioritizes progressive agent improvement rather than merely search or planning enhancements, and it remains compatible with scalable multi-agent execution \cite{xie2024aimetropolisscalinglarge}.

Our learning mechanism leverages preference optimization as a practical alternative to Reinforcement Learning from Human Feedback (RLHF) pipelines. Direct Preference Optimization (DPO) enables preference learning without an explicit reward model \cite{rafailov2024directpreferenceoptimizationlanguage}, and recent work extends preference learning beyond simple binary comparison and introduces progressive schemes to handle distribution shift \cite{liao2025tpoaligninglargelanguage,yang2025weightedrewardpreferenceoptimizationimplicit}. Our work aligns with evolutionary or multi-agent fine-tuning approaches that iteratively improve an Evolving Agent using critique or debate signals \cite{surina2025algorithmdiscoveryllmsevolutionary,zhou2025debatereflectdistillmultiagent}. However, a key limitation in many iterative preference-based pipelines is fixed reference policies, which leads to misaligned references, overly conservative updates, or stagnation.

To address this, we introduce Evolving DPO (EvoDPO), a preference-optimization loop with telemetry-driven fine-tuning control and adaptive reference management. At each fine-tuning phase, a strategist agent tunes DPO hyperparameters based on training diagnostics. In parallel, EvoDPO updates its policy using the DPO algorithm, while modeling the reference policy as a phase-indexed variable. Specifically, the Evolving Agent proposes a Kullback-Leibler (KL) regularized reference policy to be used in the next phase. Then, a policy-inspector agent applies a proxy-KL gate to decide whether to promote the proposed policy as the next reference or to keep the existing one. This integration of adaptive preference optimization and task-distributed agents prevents stagnation from stale references while avoiding instability from overly aggressive reference adaptation, enabling robust long-horizon self-evolution. The resulting mechanism is complementary to recent multi-agent preference-alignment approaches \cite{lyu2025macpoweaktostrongalignmentmultiagent}.

We analyze EvoDPO through an idealized non-stationary preference contextual bandit, where the regret decomposition separates reference-induced bias from learning error under drift.

Empirically, we evaluate ATLAS on four executable optimization domains: non-stationary contextual bandits, PINN loss reweighting for Burgers' equation, TSP, and Bin Packing. We compare against EvoTune~\cite{surina2025algorithmdiscoveryllmsevolutionary}, AutoGen~\cite{wu2023autogenenablingnextgenllm}, and FunSearch-style evaluator-driven program search.

Our contributions are as follows:

(i) We introduce ATLAS, a role-specialized multi-LLM training framework for evaluator-driven executable self-improvement. ATLAS separates candidate exploration, fine-tuning control, and reference-policy inspection rather than treating multi-agent interaction as a single monolithic search procedure.

(ii) We propose EvoDPO, an adaptive-reference variant of DPO that treats the reference policy as a phase-indexed variable and promotes a new reference only when it improves evaluator score while satisfying an empirical proxy-KL constraint.

(iii) We show that ATLAS gains arise from two complementary mechanisms: supporter agents improve the candidate distribution by expanding the search space, while EvoDPO stabilizes iterative preference optimization by mitigating stale-reference stagnation and uncontrolled reference drift.

(iv) We evaluate ATLAS across four executable optimization domains, including non-stationary bandits, PINN loss reweighting, TSP, and Bin Packing, and include ablations that isolate fixed-reference training, adaptive-reference training without supporters, and simplified reference-promotion alternatives.

\section{Proposed Methods}
\label{method}

\subsection{ATLAS Architecture and Role-Specialization}

We propose ATLAS as a collaborative, multi-agent hierarchy designed to decouple code exploration from optimization safety. Rather than relying on a single generalist model, we distribute distinct responsibilities across specialized LLMs. At the core of ATLAS is the Evolving Agent (Llama-3.2-1B), which is the only model updated by preference optimization. At phase $k$, the Evolving Agent receives prompts $x_t\sim\mathcal{X}$ and generates candidate executable programs or policies.

To construct $\mathcal{D}_k$, each candidate is evaluated by the corresponding task evaluator---simulating the non-stationary bandit, computing the PINN validation metric, or executing the heuristic solver for TSP and Bin Packing---to obtain a scalar utility score. We then form pairwise preferences by assigning the higher-scoring candidate as $y^+$ and the lower-scoring candidate as $y^-$. A pair is added to $\mathcal{D}_k$ only if its score gap exceeds a phase-dependent margin threshold. Importantly, supporter LLMs do not assign preference labels; they only propose candidate modifications or training-control suggestions, while all preference labels are induced by executable evaluator scores.

To guide this generation, an Exploration Supporter (gpt-oss-120B) performs static analysis on the Evolving Agent's output, injecting domain-specific architectural patches (e.g., dynamic normalization, Huber loss) to prevent mode collapse. During the EvoDPO update, a Fine-Tuning Strategist (Qwen 3-32B) regulates the learning dynamics by monitoring phase-level training telemetry, such as score distributions, to dynamically modulate the DPO inverse-temperature $\beta_{\mathrm{DPO}}$ and pair-selection thresholds. Finally, to ensure stable progression, a Policy Inspector (Llama4-latest) operates as a strict safety gate for adaptive reference management. It evaluates the proposed KL-regularized policy $\pi^{kl}_k$ against the current reference $\pi_{\mathrm{ref},k}$ using an empirical trajectory-conditioned proxy $\widehat{\mathrm{KL}}_k$. This inspector enforces a proxy-KL gate, accepting updates only if the candidate demonstrates sufficient utility improvement ($\Delta \hat{S}_k \geq \epsilon_s$) without violating the predefined drift budget ($\delta_H$).

For full reproducibility, the exact system prompts, API configurations, and concrete examples of architectural patches generated by the supporter agents are detailed in Appendix C.

\begin{figure*}[t]
\centering
  \includegraphics[width=\textwidth,trim={0 40 0 40},clip]{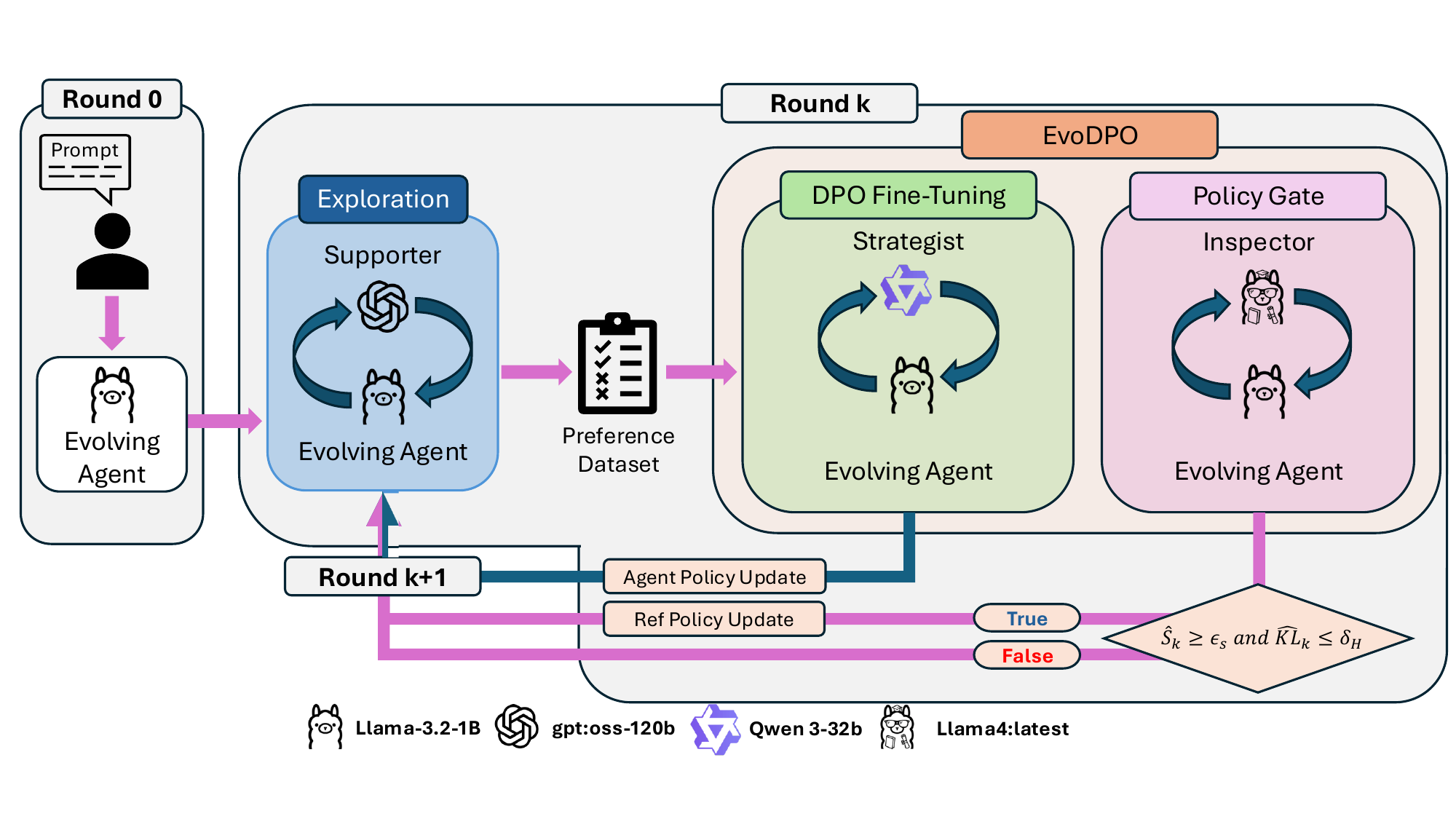}
  \caption{\emph{ATLAS workflow.} ATLAS alternates between supporter-guided candidate exploration and EvoDPO updates with strategist-guided fine-tuning and proxy-KL-gated reference promotion.}
  \label{fig:architecture}
\end{figure*}

\subsection{Evolving DPO algorithm}
In this section, we introduce our core algorithm, Evolving DPO (EvoDPO), that performs preference optimization with KL-regularized reference improvement.

\paragraph{Direct Preference Optimization.}Direct preference optimization (DPO)
\cite{rafailov2024directpreferenceoptimizationlanguage} is an efficient alternative to reward-model-based RLHF.  It optimizes a policy from pairwise preferences and avoids explicit reward learning and value-function estimation. Given a preference dataset $D$ of $(x, y^+, y^-)$, DPO minimizes
\begin{equation}
\label{eq:vanilla-dpo}
\begin{aligned}
L_{\mathrm{DPO}}
&= -\mathbb{E}_{(x,y^+,y^-)\sim D}\Biggl[
\log \sigma\Biggl(
\beta_{DPO} \log \frac{\pi_{\theta}(y^+ \mid x)}{\pi_{\mathrm{ref}}(y^+ \mid x)}  -\beta_{DPO} \log \frac{\pi_{\theta}(y^- \mid x)}{\pi_{\mathrm{ref}}(y^- \mid x)}
\Biggr)\Biggr]
\end{aligned}
\end{equation}
where $\pi_{\mathrm{ref}}$ is a fixed reference policy and $\beta_{DPO}>0$ controls the strength of the preference margin.

A practical assumption behind \eqref{eq:vanilla-dpo} is that the preference data distribution is reasonably aligned with the reference policy: many public preference datasets are collected from, or near, a supervised fine-tuning (SFT) policy, and optimization is most stable when the likelihood ratios in \eqref{eq:vanilla-dpo} are not evaluated far off-distribution. In long-horizon phase-indexed loops, however, a fixed $\pi_{\mathrm{ref}}$ can become outdated as the Evolving Agent improves its policy and dataset. In this case, the mismatch between $D_k$ and $\pi_{\mathrm{ref}}$ can yield slow adaptation and misalignment.

\paragraph{Evolving DPO.}
To address reference-data mismatch in long-horizon self-improvement, we introduce Evolving DPO (EvoDPO), which replaces the fixed reference with a phase-indexed reference $\pi_{\mathrm{ref},k}$ that updates across phases. At phase $k$, the Evolving Agent is updated by optimizing a DPO objective anchored at the current reference:
\begin{equation}
\label{eq:adaptive-dpo}
\begin{aligned}
L_{\mathrm{EvoDPO}}
&= -\mathbb{E}_{(x,y^+,y^-)\sim D_k}\Biggl[
\log \sigma\Biggl(
\beta_{DPO} \log \frac{\pi_{\theta}(y^+ \mid x)}{\pi_{\mathrm{ref},k}(y^+ \mid x)} -\beta_{DPO} \log \frac{\pi_{\theta}(y^- \mid x)}{\pi_{\mathrm{ref},k}(y^- \mid x)}
\Biggr)\Biggr]
\end{aligned}
\end{equation}
yielding an updated Evolving Agent policy $\pi_{\theta,k+1}$. The reference is then updated conservatively by the reference management.

\paragraph{Empirical KL estimate in the gate.}
The exact prompt-conditional divergence $D_{KL}$
\[
D_{KL}\big(\pi(\cdot\mid x)\,\|\,\pi_{\mathrm{ref},k}(\cdot\mid x)\big)
\]
between sequence-valued policies is intractable to compute, since it requires marginalization over the space of output sequences. We therefore use a tractable, trajectory-conditioned KL estimator that compares next-token distributions along a representative candidate trajectory while conditioning on the same prompt. Specifically, for each prompt $x \in G_k$, we associate a candidate response $y=(y_1,\ldots,y_{L_y})$ from the phase buffer. In our implementation, $y$ is the preferred candidate $y^{+}$ associated with $x$. For each token position $\ell=1,\ldots,L_y$, define the history-conditional next-token distributions
\[
p_{x,y,\ell}(\cdot) := \pi(\cdot \mid x,y_{<\ell}),
\quad
q_{x,y,\ell}(\cdot) := \pi_{\mathrm{ref},k}(\cdot \mid x,y_{<\ell}),
\]
where $y_{<\ell} = (y_1,\ldots,y_{\ell-1})$ denotes the prefix up to (but excluding) position $\ell$. We define the per-pair token-average KL as
\[
\bar{D}(x,y)
:= \frac{1}{L_y}\sum_{\ell=1}^{L_y}
D_{KL}\bigl(p_{x,y,\ell}\,\|\,q_{x,y,\ell}\bigr),
\]
and the empirical gate statistic as
\begin{equation}
\label{eq:empirical_kl_gate}
\widehat{\mathrm{KL}}_{k}\bigl(\pi\,\|\,\pi_{\mathrm{ref},k}\bigr)
:= \frac{1}{|G_k|}\sum_{(x,y)\in G_k}\bar{D}(x,y).
\end{equation}
In implementation, $D_{KL}(p_{x,y,\ell}\|q_{x,y,\ell})$ is computed directly from model logits via softmax with standard causal alignment, and masking excludes the prompt tokens and averages only over completion tokens.

\paragraph{Inspector score functional.}
At phase $k$, the Policy Inspector decides whether to promote the reference using a randomly sampled gate subset
$G_k$ drawn from the current phase buffer. Each element of $G_k$ is a prompt $x$ together with its associated evaluated candidates and task scores collected during exploration. This design aligns the gate with the on-distribution prompts encountered by the Evolving Agent during self-evolution, while reducing selection adaptivity relative to score-based selection.

Given a policy $\pi$, let $\mathrm{Score}(x,\pi)$ denote the task-specific evaluation score used by the inspector on
prompt $x$. We define the inspector score as the subset average
\[
\widehat{\mathcal{S}}_{k}(\pi)
:=
\frac{1}{|G_k|}\sum_{x\in G_k}\mathrm{Score}(x,\pi).
\]
The inspector uses the score difference relative to the current reference:
\[
\Delta\widehat{\mathcal{S}}_{k}(\pi)
:=
\widehat{\mathcal{S}}_{k}(\pi)-\widehat{\mathcal{S}}_{k}(\pi_{\mathrm{ref},k}),
\qquad
\Delta\widehat{\mathcal{S}}_{k} := \Delta\widehat{\mathcal{S}}_{k}(\pi^{kl}_{k}).
\]

\paragraph{Reference-improvement operator.}
In practice, we approximate the ideal reference-improvement operator by optimizing over a finite candidate set $\mathcal{C}_k \subset \Pi$. At phase $k$, we construct $\mathcal{C}_k$ from (i) a small number of intermediate checkpoints of the Evolving Agent collected during the phase, and (ii) candidate policies produced under the exploration and guidance from the Fine-Tuning Strategist. Thus, $\mathcal{C}_k = \{\pi^{(1)}_k,\ldots,\pi^{(M)}_k\}$, and $\mathcal{C}_k$ always includes the phase-final policy
$\pi_{\theta,k+1}$. With this candidate set, we define the practical reference-improvement operator as
\begin{equation}
\label{eq:practical-kl-operator}
\pi^{kl}_k
:=
\arg\max_{\pi\in\mathcal{C}_k}
\Bigl\{
\widehat{\mathcal{S}}_k(\pi)
-\beta_{\mathrm{ref}}\,\widehat{\mathrm{KL}}_k\bigl(\pi\,\|\,\pi_{\mathrm{ref},k}\bigr)
\Bigr\},
\end{equation}
where $\widehat{\mathcal{S}}_k$ and $\widehat{\mathrm{KL}}_k$ are computed using the phase-$k$ gate subset. Here, $\beta_{\mathrm{ref}} > 0$ is a reference-gating trade-off coefficient that controls how strongly the inspector penalizes reference drift during selection. 

\begin{algorithm}[htbp]
\caption{EvoDPO reference update}
\label{alg:AdaptiveDPO}
\begin{algorithmic}[1]
\State {\bfseries Initialize:} Evolving Agent policy $\pi_{\theta,1}$, Reference $\pi_{\text{ref},1} \leftarrow \pi_{\theta,1}$
\State \textbf{Hyperparameters:} DPO temperature $\beta_{DPO}$, KL budget $\delta_H$, score tolerance $\epsilon_s$
\For{$k = 1, \dots, K$}
    \State Collect $D_k = \{(x_i, y_i^{+}, y_i^{-}, m_i)\}_{i=1}^{N_k}$
    \State Update Evolving Agent policy
    $
    \pi_{\theta,k+1}
    \leftarrow
    \arg\min_{\pi \in \Pi}
    \mathcal{L}_{\text{EvoDPO}}(\pi;\pi_{\text{ref},k},\mathcal{D}_k)
    $
    \State Compute $\pi^{\{k\}}_k$ via Eq.~\eqref{eq:practical-kl-operator}
    \State Compute
    $
    \widehat{\mathrm{KL}}_k(\pi^{\{k\}}_k \,\|\, \pi_{\mathrm{ref},k})
    $
    \If{$\Delta \widehat{\mathcal{S}}_k \geq \epsilon_s$ \textbf{and}
    $\widehat{\mathrm{KL}}_k(\pi^{\{k\}}_k \,\|\, \pi_{\mathrm{ref},k}) \leq \delta_H$}
        \State Accept $\pi_{\text{ref},k+1} \leftarrow \pi^{\{k\}}_k$
    \Else
        \State Reject $\pi_{\text{ref},k+1} \leftarrow \pi_{\text{ref},k}$
    \EndIf
\EndFor
\end{algorithmic}
\end{algorithm}

\paragraph{Reference update with conservative acceptance.}
At phase $k$, the Evolving Agent proposes the candidate reference $\pi^{kl}_{k}$ via Eq. ~\eqref{eq:practical-kl-operator}. Then the Policy Inspector applies a gating rule using (i) a score improvement $\Delta \widehat{\mathcal{S}}_k$ and (ii) a proxy-KL gate measured by the trajectory-conditioned token-level KL estimator $\widehat{\mathrm{KL}}_k$ on the phase-$k$ gate subset $G_k$. The system accepts a reference promotion if
\begin{equation}
\label{eq:trust-region}
\Delta\widehat{\mathcal S}_k \ge \epsilon_s
\quad\text{and}\quad
\widehat{\mathrm{KL}}_k\bigl(\pi^{kl}_k\,\|\,\pi_{\mathrm{ref},k}\bigr)\le \delta_H,
\end{equation}
where $\epsilon_s\geq 0$ is a small tolerance that prevents noise-driven promotions and $\delta_H$ prevents aggressive shifts. If Eq. ~\eqref{eq:trust-region} holds, the inspector updates the reference for the next phase, $\pi_{\mathrm{ref},k+1} \leftarrow \pi^{kl}_{k}$; otherwise, it maintains $\pi_{\mathrm{ref},k+1} \leftarrow \pi_{\mathrm{ref},k}$. This accept/reject rule is a conservative realization of the ideal KL-regularized progression in Eq. ~\eqref{eq:kl-regularization}, made operational by restricting to $\mathcal{C}_k$ in Eq. ~\eqref{eq:practical-kl-operator} and using the $\widehat{KL}_k$.

\subsection{Theorem for Adaptive Reference Analysis}
\label{sec:theory}
\paragraph{Scope of the theory.} The regret analysis is intended to isolate the effect of adaptive reference progression under drift, not to provide an end-to-end guarantee for full sequence-level LLM fine-tuning. The theoretical model abstracts EvoDPO as a non-stationary preference contextual bandit with a KL-regularized comparator. Therefore, the theorem does not directly model the finite candidate set, the empirical token-level KL estimator, or the discrete accept/reject inspector gate used in implementation. These components are treated as practical approximations of the idealized reference-progression mechanism and are evaluated empirically through ablations, acceptance-rate statistics, and sensitivity analyses. The non-stationary contextual bandit experiment in this section is therefore best interpreted as a reward-feedback stress test of adaptive reference management, rather than a direct instantiation of the Bradley--Terry preference-bandit abstraction used in the theorem.

\paragraph{Problem Formulation}
We consider a non-stationary preference-based contextual bandit that abstracts the reference-progression mechanism used in EvoDPO. At each round t, the environment reveals a context $x_t \in \mathcal{X}$, the agent samples two actions $y^{(1)}_t, y^{(2)}_t \sim \pi_t(\cdot | x_t)$, where the action space $\mathcal{Y}$ is a finite set. Thus $\pi_t(\cdot|x_t)$ is a categorical distribution over $\mathcal{Y}$.
The preference outcome follows a Bradley-Terry model with utility
\begin{equation}
    u_t(x,y) := \langle\theta_t, \phi(x,y)\rangle
\end{equation}
\begin{equation}
    \Pr\!\left(y \succ y' \mid x,t\right) = \sigma\!\left(u_t(x,y) - u_t(x,y')\right)
\end{equation}
where $\phi(x,y)\in\mathbb{R}^d$ is a known feature map and $\theta_t$ drifts with a total variation budget $V_T$, where $V_T >0$ is a known constant to the agent.
\begin{equation}
    \sum^{T}_{t=2} \|\theta_{t} - \theta_{t-1}\|_2 \leq V_T
\end{equation}
Define the value of the policy at time $t$ as
\begin{equation}
    J_t(\pi) := \mathbb{E}_{y_t\sim \pi(\cdot \mid x_t)} [u_t(x_t,y_t)].
\end{equation}
Although EvoDPO is implemented in phases, the same process can be indexed per step $t$ by viewing the reference as piecewise constant over time. We define $\pi^*_t$ as the instantaneous optimal policy, $\pi_t$ as the policy produced by the algorithm at time $t$. We also define the KL-regularized comparator $\pi^{kl}_t$ for the reference update.
\begin{equation}
\label{eq:ideal_estimated_policy}
\pi_t := \arg\max_{\pi\in\Pi}
\left\{\widehat{J}_t(\pi) - \beta_{\mathrm{ref}} D_{\mathrm{KL}}
\bigl(\pi(\cdot|x_t)\| \pi_{\mathrm{ref},t-1}(\cdot|x_t) \bigr) \right\}
\end{equation}
Here $\widehat{J}_t$ is induced by the sliding-window preference estimator in Appendix~\ref{app:theory_proofs}; this idealized policy is used only for the regret analysis and should not be interpreted as the exact finite-sample sequence-level DPO update used in implementation.
\begin{equation}
    \pi^{*}_t := \arg \max_{\pi \in \Pi} J_t(\pi)
\end{equation}
\begin{equation}
\label{eq:kl-regularization}
    \pi^{kl}_t := \arg \max_{\pi \in \Pi} \bigl\{J_t(\pi)-\beta_{\mathrm{ref}} D_{KL}(\pi(\cdot|x_t) || \pi_{ref,t-1}(\cdot|x_t))\bigr\}
\end{equation}

We define cumulative dynamic regret as
\begin{equation}
\label{eq:cumulative_regret}
    R_T = \sum^T_{t=1} r_t.
\end{equation}

\paragraph{Dynamic Regret Analysis.}
The cumulative dynamic regret decomposes as
\[
R_T = R_T^{\mathrm{error}} + R_T^{\mathrm{bias}}.
\]
Combining the learning-error bound with the reference-induced bias bound gives, with probability at least $1-\delta$,
\begin{equation}
\label{eq:main_regret_bound}
R_T \leq \mathcal{O} \left( T^{\kappa}V_T + T^{1-\frac{\kappa}{2}}\sqrt{\log(T/\delta)} + T^{1-\kappa} + V_T
\right),
\end{equation}
where the initial reference-mismatch term is absorbed into the constant. The formal derivation is provided in Appendix~\ref{app:theory_proofs}.

\section{Numerical Results}
\label{Sec:Results}

In this section, we conduct a comprehensive set of experiments to evaluate our proposed methods. The main finding is that structured collaboration between an Evolving Agent and Role-Specialized Multi-LLM Training supporters enables the ATLAS to solve complex optimization problems that require adaptation to non-stationary environments and time-dependent PDE across varying Reynolds number. 

\subsection{Experimental Setup and Domains}

We evaluate ATLAS's capacity for long-horizon self-improvement across diverse optimization landscapes. We deploy the framework in four challenging domains. First, to test decision-making under concept drift, we evaluate the system on a non-stationary contextual bandit, where the underlying environment parameter shifts continuously under a known total variation budget. Second, to demonstrate continuous optimization capabilities in scientific machine learning (SciML), we task the agent with solving the 1D viscous Burgers' equation using a PINN. Here, the Evolving Agent must adaptively reweight a composite loss function to prevent mode collapse as a drifting viscosity schedule pushes the environment from smooth diffusion to a stiff, convection-dominated shock regime. Finally, to assess discrete combinatorial generalization, we introduce Traveling Salesperson Problem (TSP) and Bin Packing tasks, requiring the agent to design scalable heuristic routing policies. Across all domains, we benchmark ATLAS against fixed-reference evolution (EvoTune) and representative automated-discovery baselines, including FunSearch and AutoGen. Full mathematical formulations and hyperparameter configurations for all environments are detailed in Appendix \ref{baseline}.

\paragraph{Baselines.} We evaluate the performance of our full framework, ATLAS, through an ablation study and a comparison against a standard baseline (detailed in Appendix \ref{baseline}) to isolate the contributions of our proposed components:

(i) EvoTune: A baseline method that utilizes a fixed reference policy and maintains static hyperparameters over the entire course of training.

(ii) FunSearch: We implement FunSearch as an evaluator-driven code-generation baseline that iteratively proposes candidate programs, scores them using the same task evaluator, and retains high-performing programs in an experience buffer. Unlike ATLAS, FunSearch does not update the underlying Evolving Agent with DPO and does not use an adaptive reference model.

(iii) AutoGen: We implement AutoGen as a multi-agent conversation baseline in which agents propose, critique, and revise executable candidate solutions using the same task-specific evaluator. Unlike ATLAS, AutoGen performs inference-time coordination only and does not fine-tune the Evolving Agent or apply proxy-KL gate reference promotion.

\subsection{Results}

\begin{figure*}[t]
    \centering

    \begin{subfigure}[b]{0.48\textwidth}
        \centering
        \includegraphics[width=\textwidth]{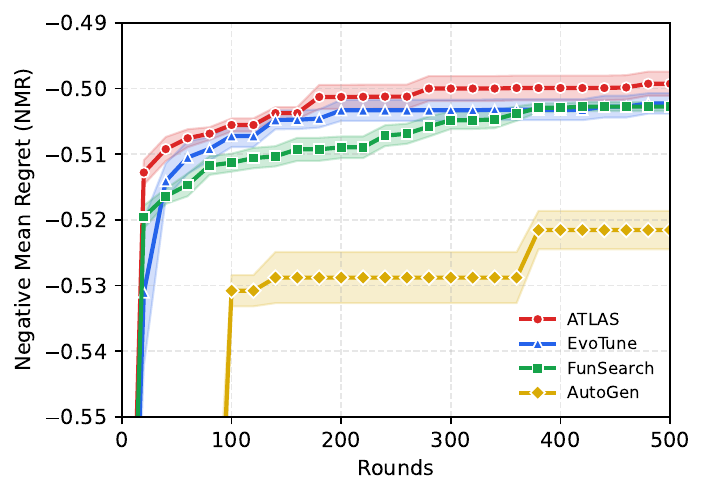}
        \caption{Bandit}
        \label{fig:bandit}
    \end{subfigure}
    \hfill
    \begin{subfigure}[b]{0.48\textwidth}
        \centering
        \includegraphics[width=\textwidth]{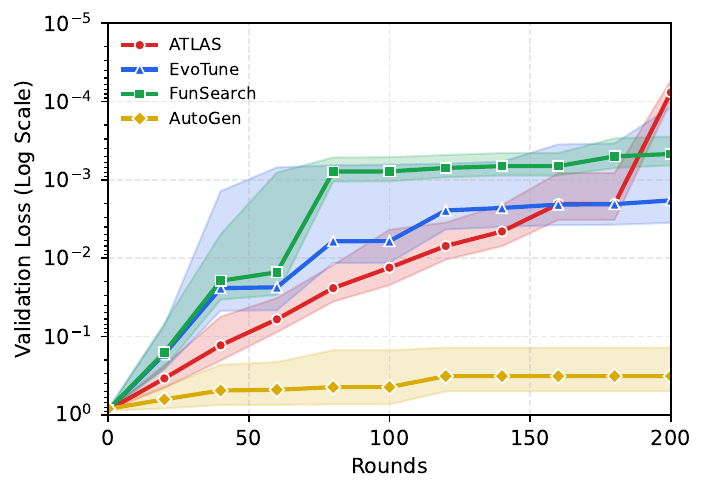}
        \caption{PINN}
        \label{fig:pinn}
    \end{subfigure}

    \vspace{0.5em}

    \begin{subfigure}[b]{0.48\textwidth}
        \centering
        \includegraphics[width=\textwidth]{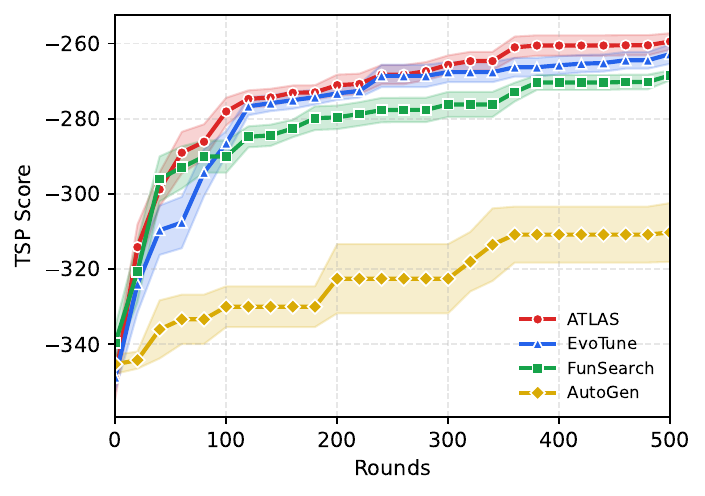}
        \caption{TSP}
        \label{fig:tsp}
    \end{subfigure}
    \hfill
    \begin{subfigure}[b]{0.48\textwidth}
        \centering
        \includegraphics[width=\textwidth]{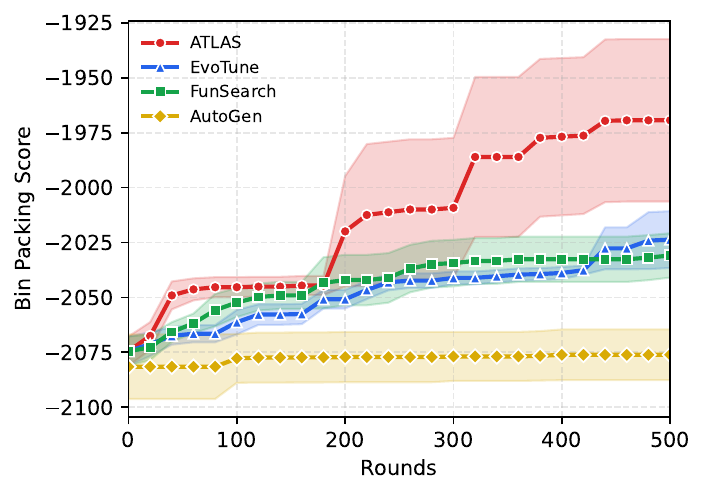}
        \caption{Bin Packing}
        \label{fig:bin}
    \end{subfigure}

    \caption{Experimental results across four executable optimization domains. 
    (a) Bandit Negative Mean Regret (NMR), shown with a truncated y-axis for readability. 
    (b) PINN fixed-evaluator validation loss on a log scale. 
    (c) TSP evaluator score. 
    (d) Bin Packing evaluator score. 
    Shaded regions denote the standard error of the mean (SEM) across 10 independent seeds.}
    \label{fig:main_results}

    \vspace{0.8em}

    \captionof{table}{Main 10-seed performance summary across four executable optimization domains. For Bandit, TSP, and Bin Packing, values report relative improvement from the initial policy to the final policy, where higher is better. For PINN, values report the final fixed-evaluator validation loss, where lower is better.}
    \label{tab:main_results}

    \small
    \begin{tabular}{lcccc}
    \toprule
    Method & Bandit Improve. $\uparrow$ & PINN Final Loss $\downarrow$ & TSP Improve. $\uparrow$ & Bin Packing Improve. $\uparrow$ \\
    \midrule
    EvoTune   & $20.23\%$ & $1.80\times 10^{-3}$ & $24.58\%$ & $2.44\%$ \\
    AutoGen   & $2.78\%$  & $3.214\times 10^{-1}$ & $10.17\%$ & $4.36\%$ \\
    FunSearch & $20.16\%$ & $4.70\times 10^{-4}$ & $20.36\%$ & $2.10\%$ \\
    ATLAS     & $\mathbf{20.71\%}$ & $\mathbf{7.70\times 10^{-5}}$ & $\mathbf{25.52\%}$ & $\mathbf{5.06\%}$ \\
    \bottomrule
    \end{tabular}
\end{figure*}

\paragraph{Experimental setup.}
We evaluate ATLAS across four executable optimization domains: non-stationary contextual bandits, PINN loss reweighting for Burgers' equation, Traveling Salesperson Problem (TSP), and Bin Packing. Figure~\ref{fig:main_results} shows the learning trajectories over self-improvement rounds, and Table~\ref{tab:main_results} summarizes final performance.

Each task is evaluated using its own task-specific metric. For Bandit, we report relative improvement in Negative Mean Regret (NMR). For PINN, we report the final fixed-evaluator validation loss. For TSP, we report relative improvement in evaluator score based on the negative optimality gap against the exact-solver reference. For Bin Packing, we report relative improvement in evaluator score based on the negative extra-bin count relative to the integer-programming reference. Full hyperparameters, hardware details, seed counts, and task configurations are provided in Appendix~\ref{appendix:param}.


\paragraph{Bandit.}
In the non-stationary contextual bandit task, ATLAS achieves the largest relative improvement in Negative Mean Regret (NMR). ATLAS improves by $20.71\%$, compared with $20.23\%$ for EvoTune, $20.16\%$ for FunSearch, and $2.78\%$ for AutoGen. Although the margin over EvoTune and FunSearch is modest, ATLAS maintains the best final trajectory in Figure~\ref{fig:main_results}(a), suggesting that supporter-guided exploration and adaptive reference management provide stable gains under parameter drift.

\paragraph{PINN.}
For the PINN task, candidate-generated losses are used only for training. All candidates are ranked and reported using a fixed external evaluator that is not generated by the agent, including normalized PDE residual, initial-condition error, boundary-condition error, and solution $L_2$ error against a high-resolution reference solution. This prevents trivial improvements caused by rescaling the generated training objective. ATLAS obtains the lowest final fixed-evaluator validation loss, $7.70\times10^{-5}$, compared with $1.80\times10^{-3}$ for EvoTune, $4.70\times10^{-4}$ for FunSearch, and $3.214\times10^{-1}$ for AutoGen. This indicates that ATLAS achieves stronger error reduction under the fixed evaluator while avoiding dependence on the generated training objective itself.

\paragraph{TSP.}
For the Traveling Salesperson Problem, ATLAS improves the evaluator score by $25.52\%$, compared with $24.58\%$ for EvoTune, $20.36\%$ for FunSearch, and $10.17\%$ for AutoGen. The TSP evaluator measures tour quality using the negative optimality gap relative to an exact-solver reference, while also enforcing validity constraints such as visiting every node exactly once and returning to the start node. These results suggest that supporter-guided structural modifications, such as route repair and local-improvement heuristics, help the Evolving Agent move beyond simple greedy tour construction.

\paragraph{Bin Packing.}
For Bin Packing, ATLAS improves the evaluator score by $5.06\%$, compared with $2.44\%$ for EvoTune, $2.10\%$ for FunSearch, and $4.36\%$ for AutoGen. The Bin Packing evaluator measures solution quality using the negative extra-bin count relative to an integer-programming reference, while checking feasibility constraints such as assigning every item exactly once and respecting bin capacities. The stronger improvement over EvoTune and FunSearch indicates that the supporter layer helps discover more effective packing heuristics, while the Policy Inspector prevents unstable reference promotions that improve isolated instances but harm feasibility or generalization.


\subsection{Ablation Study}
\label{ablation}

To isolate the contribution of each component, we evaluate the system incrementally in Table~\ref{tab:ablation_final}. Moving from EvoTune to EvoDPO isolates the effect of adaptive-reference preference optimization without the multi-LLM supporter layer. This transition substantially improves PINN performance, reducing validation loss from $1.829\times10^{-3}$ to $3.53\times10^{-4}$, improves TSP score from $-262.8154$ to $-261.7880$, and improves Bin Packing score from $-2023.6720$ to $-2004.7350$. These gains indicate that adaptive reference evolution can improve stability and late-phase adaptation in several domains even without external supporter agents.

Adding the role-specialized supporter layer yields the full ATLAS system and produces the best performance across all four tasks. ATLAS improves Bandit NMR to $-0.4993$, reduces PINN validation loss to $7.7\times10^{-5}$, improves TSP score to $-259.4793$, and improves Bin Packing score to $-1969.3510$. These results support the interpretation that EvoDPO contributes reference-adaptive training stability, while supporter agents improve candidate quality and search diversity. The combination is especially effective in combinatorial tasks, where structural heuristic modifications help the Evolving Policy move beyond narrow candidate-generation patterns.

\begin{table}[htbp]
\centering
\caption{Ablation study comparing final performance across all tasks. Bandit values report Negative Mean Regret (NMR), where higher is better. PINN values report fixed-evaluator validation loss, where lower is better. TSP and Bin Packing values report task-specific executable solver scores, where higher is better. Metrics are averaged across 10 independent seeds.}

\label{tab:ablation_final}
\small

\begin{tabular}{lcccc}
\toprule
Configuration & Bandit $\uparrow$ & PINN $\downarrow$ & TSP $\uparrow$ & Bin Packing $\uparrow$ \\

\midrule
Baseline (EvoTune)       & -0.5023 & 0.001829 & -262.8154 & -2023.6720 \\
+ Adaptivity (EvoDPO)    & -0.5023 & 0.000353 & -261.7880 & -2004.7350 \\
+ LLM Supporters (ATLAS) & \textbf{-0.4993} & \textbf{0.000077} & \textbf{-259.4793} & \textbf{-1969.3510} \\
\bottomrule
\end{tabular}
\end{table}

The ablation results show that ATLAS gains come from complementary mechanisms. EvoDPO improves stability by reducing stale-reference effects under drifting preference data, while the supporter layer improves exploration by expanding the candidate distribution with structural edits, hyperparameter suggestions, and task-specific program modifications. Thus, ATLAS should be interpreted as combining supporter-driven exploration with gated adaptive-reference stabilization: EvoDPO does not replace exploration, but makes iterative fine-tuning less stagnant once stronger candidates are available.

\section{Conclusion}
We introduced ATLAS, a role-specialized multi-LLM training framework that decomposes long-horizon executable self-improvement into candidate exploration, fine-tuning control, and reference-policy inspection. The core algorithmic component, EvoDPO, performs preference optimization with adaptive reference evolution, promoting a new reference only when it improves evaluator score while satisfying an empirical proxy-KL gate constraint. 

We analyzed an idealized non-stationary preference contextual bandit to isolate the benefit of adaptive reference progression under drift, rather than to claim an end-to-end guarantee for full sequence-level LLM fine-tuning. Empirically, we evaluated the implemented system across decision-making, scientific machine learning, and combinatorial optimization tasks. The results suggest that gated reference evolution can mitigate stale-reference stagnation while maintaining stable long-horizon improvement.

Future work will extend this framework to broader scientific discovery settings where evaluator signals are sparse, noisy, or expensive to obtain.

\bibliographystyle{plainnat}
\bibliography{reference}
\newpage
\newpage
\appendix
\onecolumn

\section{ATLAS appendix}

This appendix provides the full system-level execution procedure for ATLAS. While the main text focuses on the core EvoDPO update and the reference-promotion rule, Algorithm~\ref{alg:multi_supporters} describes how candidate generation, evaluator-based scoring, preference-pair construction, supporter feedback, and EvoDPO fine-tuning are combined into one long-horizon self-improvement loop.

\begin{table}[httb]
  \caption{LLM agents and their assigned roles.}
  \label{tab:agent-roles}
  \centering
  \begin{small}
  \renewcommand{\arraystretch}{1.05}
  \begin{tabular}{c l}
    \toprule
    \textsc{Model} & \textsc{Role / Task allocation} \\
    \midrule
    \texttt{gpt-oss-120B}    & \makecell[l]{Exploration Supporter} \\
    \addlinespace[2pt]
    \texttt{Qwen 3-32b} & \makecell[l]{Fine-Tuning Strategist} \\
    \addlinespace[2pt]
    \texttt{Llama4-latest}   & Policy Inspector \\
    \addlinespace[2pt]
    \texttt{Llama-3.2-1B}    & Evolving Agent \\
    \bottomrule
  \end{tabular}
  \end{small}
\end{table}

The six island buffers $\{\mathcal{P}_i\}_{i=0}^{5}$ maintain diverse candidate pools across parallel exploration streams. Each buffer stores prompts, generated candidate outputs, evaluator scores, and metadata such as validity, runtime status, and task-specific diagnostics. Preference pairs are constructed only after deterministic evaluator scoring: the higher-scoring candidate is used as $y^+$ and the lower-scoring candidate is used as $y^-$. This ensures that preference labels come from executable task feedback rather than from supporter-model judgments.

\begin{algorithm}[htbp]
\caption{ATLAS}
\label{alg:multi_supporters}
\begin{algorithmic}[1]
\State \textbf{Initialize:} island buffer $\{\mathcal{P}_i\}_{i=0}^{5}$, Evolving Agent policy $\pi_{\theta,0}$, reference $\pi_{\mathrm{ref},0} \leftarrow \pi_{\theta,0}$
\State \textbf{Set:} fine-tuning round $k \leftarrow 0$
\For{$t = 1, 2, \dots$}
    \For{island $i \in \{0,\dots,5\}$ in parallel}
        \State Sample prompt $x_{i,t} \sim \mathcal{X}$
        \State Candidate set $\mathcal{C}_{i,t} = \{y_{i,t}^{(j)}\}_{j=1}^{J} \sim \pi_{\theta,k}(\cdot \mid x_{i,t})$
        \For{$j = 1,\dots,J$}
            \State Evaluate $y_{i,t}^{(j)}$ to obtain score $s_{i,t}^{(j)}$ and metadata $m_{i,t}^{(j)}$
            \State $\mathcal{P}_i \leftarrow \mathcal{P}_i \cup \{(x_{i,t}, y_{i,t}^{(j)}, s_{i,t}^{(j)}, m_{i,t}^{(j)})\}$
        \EndFor
    \EndFor
    \If{fine-tuning flag = 1 \textbf{and} $k < K$}
        \State $\mathcal{D}^{\mathrm{new}}_k \leftarrow \mathrm{ConstructPairs}(\cup_i \mathcal{P}_i)$
        \State $\mathcal{D}_k \leftarrow \mathrm{Window}(\mathcal{D}_{k-1}\cup \mathcal{D}^{\mathrm{new}}_k)$
        \State \textbf{Exploration Supporter:} proposes candidate modifications for the next exploration phase
        \State \textbf{Execute one EvoDPO phase} using \(\mathcal{D}_k\), \(\pi_{\theta,k}\), and \(\pi_{\mathrm{ref},k}\)
        \State \hspace{\algorithmicindent} \textbf{Fine-Tuning Strategist:} set hyperparameters
        \State \hspace{\algorithmicindent} \textbf{Policy Inspector:} reference management
        \State $k \leftarrow k+1$
    \EndIf
\EndFor
\end{algorithmic}
\end{algorithm}
The operator $\mathrm{Window}(\cdot)$ keeps recent preference pairs and prevents stale comparisons from dominating under non-stationary task distributions. If exploration fails to produce candidates with sufficient evaluator-score separation, the fine-tuning phase can be skipped, reducing unnecessary DPO updates from weak or noisy preference pairs. The Exploration Supporter affects the next exploration phase by proposing candidate modifications, while the Fine-Tuning Strategist and Policy Inspector act inside EvoDPO: the Strategist adjusts DPO hyperparameters, and the Inspector decides whether the candidate reference should be promoted.

\section{Proofs for Section 2.3}
\label{app:theory_proofs}
This appendix provides detailed derivations for the dynamic regret bound of evolving direct preference optimization (EvoDPO). 

\subsection{Scope and Limitation of the Theoretical Abstraction}
\label{app:scope}

Our dynamic regret analysis is intentionally stylized: it studies a non-stationary linear preference contextual bandit under a Bradley--Terry model with a finite action set and linear utilities. This abstraction isolates one mechanism that is common to EvoDPO and long-horizon preference optimization---reference progression under drift---but it does not attempt to faithfully model full sequence-level LLM fine-tuning dynamics. In particular, the analysis does not capture (i) the accept/reject nature of the inspector gate, (ii) restriction to finite candidate sets $C_k$ of checkpoints, or (iii) the token-level KL proxy used in practice. These components are treated as algorithmic approximations and are validated empirically via acceptance-rate statistics and sensitivity/robustness diagnostics (App.~\ref{app:klproxy}--\ref{app:robust}).

\paragraph{Interpretation.}
The theorem should be read as evidence that systematic reference refresh can yield sublinear dynamic regret in a drifting preference environment, whereas a fixed reference can incur linear bias under drift; the gated implementation aims to approximate systematic refresh while enforcing conservative stability constraints, whose reliability we quantify empirically (App.~\ref{app:klproxy}--\ref{app:robust}).

\subsection{Preliminaries and Definitions}
\label{app:preliminaries}
We first formalize the sliding-window estimator and its requisite assumptions. Let
$W_t$ denote the sliding window of recent comparison rounds with size
$W=|W_t|=T^\kappa$, where $\kappa\in(0,1)$.

For each comparison round $\tau$, the learner observes a context $x_\tau$, two candidate actions
$y_\tau^{(1)},y_\tau^{(2)}$, and a binary preference label
\[
p_\tau=\mathbf{1}\{y_\tau^{(1)}\succ y_\tau^{(2)}\}.
\]
We define the pairwise feature difference
\[
\psi_\tau
=
\phi(x_\tau,y_\tau^{(1)})
-
\phi(x_\tau,y_\tau^{(2)}).
\]
Under the Bradley--Terry preference model,
\[
\mathbb{P}(p_\tau=1\mid x_\tau,y_\tau^{(1)},y_\tau^{(2)})
=
\sigma(\langle\theta_\tau,\psi_\tau\rangle).
\]
The sliding-window estimator is
\[
\widehat{\theta}_t
=
\arg\min_{\theta}
\sum_{\tau\in W_t}
\left[
\log(1+\exp(\theta^\top\psi_\tau))
-
p_\tau\theta^\top\psi_\tau
\right]
+
\frac{\lambda}{2}\|\theta\|_2^2.
\]
The corresponding regularized covariance matrix is
\[
A_t
=
\sum_{\tau\in W_t}
\psi_\tau\psi_\tau^\top
+
\lambda I_d.
\]
Since $\|\phi(x,y)\|_2\leq\phi_{\max}$, we have
\[
\|\psi_\tau\|_2\leq 2\phi_{\max}.
\]

\paragraph{Oracle-optimal action} Define the oracle-optimal action
\[
y^*_t(x) \in \arg \max_{y \in \mathcal{Y}} u_t(x,y).
\]

\begin{assumption}[Covariance Diversity]
\label{ass:covariance}
There exists a constant $c>0$ such that for all $t\geq W$,
\[
\lambda_{\min} \left(\sum_{\tau\in W_t} \psi_\tau\psi_\tau^\top + \lambda I_d \right) \geq cW.
\]
\end{assumption}

\begin{assumption}[Uniform logistic curvature]
\label{ass:logistic_curvature}
There exists \(m_0>0\) such that for all relevant \(z\),
\[
\sigma(z)(1-\sigma(z)) \geq m_0.
\]
\end{assumption}

We also use that $\sigma$ is $L_{\sigma}$-Lipschitz with $L_{\sigma} \leq \frac{1}{4}$:
\[
|\sigma(a) - \sigma(b)| \leq L_{\sigma} |a-b|.
\]

\begin{assumption}[Boundedness]
   We assume the feature map, parameter, and utilities are uniformly bounded,
    \[
    \|\phi(x,y)\|_2 \leq \phi_{max} , \quad \|\theta_t\|_2 \leq \theta_{max}, \quad |u_t(x,y)| \leq u_{max}.
    \] 
\end{assumption}

\begin{assumption}[Comparator support]
\label{ass:comparator_support}
We assume the KL-regularized comparator has full support, uniformly lower-bounded by \(\pi_{\min}^{{kl}}>0\):
\[
\pi_t^{{kl}}(y\mid x)\geq \pi_{\min}^{{kl}}
\]
for all relevant \(t,x,y\). This assumption can be derived from a uniformly supported reference policy and bounded utilities for the exponential-tilted comparator, but we state it directly for clarity.
\end{assumption}

\begin{assumption}[Uniform contextual margin]
\label{ass:uniform_margin}
There exists $\gamma>0$ such that for every time $t$ and every relevant context $x$, the optimal action
\[
y_t^*(x)\in\arg\max_{y\in\mathcal{Y}}u_t(x,y)
\]
is unique and satisfies
\[
u_t(x,y_t^*(x)) - \max_{y\neq y_t^*(x)}u_t(x,y) \geq \gamma.
\]
\end{assumption}

\subsection{Auxiliary Lemmas}
\begin{lemma}[Oracle Switching Budget]
\label{lemma:switching_budget}
Under Assumption~\ref{ass:uniform_margin}, the number of switches of the context-indexed oracle action satisfies
\[
B_T \leq \frac{2\phi_{max}}{\gamma} \sum^{T}_{t=2} \|\theta_t - \theta_{t-1}\|_2 \leq \frac{2\phi_{max}}{\gamma} V_T.
\]
Here
\[
B_T := \sum_{t=2}^T
\mathbf{1}\{y_t^*(x_t)\neq y_{t-1}^*(x_t)\},
\]
where both oracle actions are evaluated at the same context $x_t$. This convention is used only for the switching analysis and is justified by the uniform contextual margin assumption.
\end{lemma}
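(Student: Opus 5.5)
The plan is to bound each indicator in $B_T$ separately by a multiple of $\|\theta_t-\theta_{t-1}\|_2$ and then sum over $t$. Fix $t\ge 2$ and suppose a switch occurs, i.e. $a:=y_t^*(x_t)\neq b:=y_{t-1}^*(x_t)$. Both oracle actions are evaluated at the same context $x_t$, so only the parameter changes between the two utility functions. The contradiction we aim for is that a switch forces the parameter change to be large relative to $\gamma$.

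The first key step uses the uniform contextual margin (Assumption~\ref{ass:uniform_margin}) twice. At time $t$, $a$ is the unique maximizer, so $u_t(x_t,a)-u_t(x_t,b)\ge\gamma$. At time $t-1$, $b$ is the unique maximizer, so $u_{t-1}(x_t,b)-u_{t-1}(x_t,a)\ge\gamma$. Writing $\psi:=\phi(x_t,a)-\phi(x_t,b)$ and adding the two inequalities gives
\[
\langle\theta_t-\theta_{t-1},\psi\rangle\ge 2\gamma .
\]
By Cauchy--Schwarz and $\|\psi\|_2\le 2\phi_{\max}$, this yields $2\gamma\le 2\phi_{\max}\|\theta_t-\theta_{t-1}\|_2$. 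Hence
\[
\mathbf{1}\{y_t^*(x_t)\neq y_{t-1}^*(x_t)\}\le \frac{\phi_{\max}}{\gamma}\|\theta_t-\theta_{t-1}\|_2 .
\]
This indicator bound also holds trivially when there is no switch. Its constant is actually a factor $2$ better than the one stated.

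If one instead uses only a single margin inequality, the result is $\gamma\le 2\phi_{\max}\|\theta_t-\theta_{t-1}\|_2$. This gives exactly the stated constant $2\phi_{\max}/\gamma$. Either route suffices, since the stated bound is weaker than what we prove.

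The final step sums the indicator bound over $t=2,\dots,T$. The right-hand side becomes $\frac{2\phi_{\max}}{\gamma}\sum_{t=2}^{T}\|\theta_t-\theta_{t-1}\|_2$, and the drift budget then gives the bound $\frac{2\phi_{\max}}{\gamma}V_T$.

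The only delicate point is that the margin assumption must hold at the context $x_t$ for both $\theta_t$ and $\theta_{t-1}$. This is the case because the assumption is stated for every time and every relevant context. The same-context convention in the definition of $B_T$ is exactly what makes this step legitimate. Beyond this, no real obstacle is expected.
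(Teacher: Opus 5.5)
Your proof is correct and is essentially the paper's argument. The paper combines optimality of $y_t^*(x_t)$ at time $t$ with the margin at time $t-1$ (at context $x_t$), and bounds the utility change of each of the two actions by $\phi_{\max}\|\theta_t-\theta_{t-1}\|_2$; this is exactly your single-margin variant, which yields the stated constant $2\phi_{\max}/\gamma$. Your observation that also invoking the margin at time $t$ sharpens the per-switch inequality to $\gamma\le\phi_{\max}\|\theta_t-\theta_{t-1}\|_2$ is valid, since Assumption~\ref{ass:uniform_margin} applies at every time and relevant context, so the lemma in fact holds with constant $\phi_{\max}/\gamma$.
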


\begin{proof}
Fix $t\ge 2$ and abbreviate $y_t^*:=y_t^*(x_t)$ and $y_{t-1}^*:=y_{t-1}^*(x_t)$, the oracle-optimal action defined in \ref{app:preliminaries}.
If $y_t^*=y_{t-1}^*$, there is nothing to show. Suppose $y_t^*\neq y_{t-1}^*$. By optimality at time $t$,
\[
u_t(x_t,y_t^*) \ge u_t(x_t,y_{t-1}^*).
\]
Using linear utilities and $\|\phi(x,y)\|_2\le \phi_{\max}$, for any $y$,
\[
|u_t(x_t,y)-u_{t-1}(x_t,y)|
=|\langle \theta_t-\theta_{t-1},\phi(x_t,y)\rangle|
\le \phi_{\max}\|\theta_t-\theta_{t-1}\|_2.
\]
Therefore,
\[
u_t(x_t,y_t^*) \le u_{t-1}(x_t,y_t^*)+\phi_{\max}\|\theta_t-\theta_{t-1}\|_2,
\]
\[
u_t(x_t,y_{t-1}^*) \ge u_{t-1}(x_t,y_{t-1}^*)-\phi_{\max}\|\theta_t-\theta_{t-1}\|_2.
\]
Combining with $u_t(x_t,y_t^*) \ge u_t(x_t,y_{t-1}^*)$ gives
\[
u_{t-1}(x_t,y_t^*) - u_{t-1}(x_t,y_{t-1}^*)
\ge -2\phi_{\max}\|\theta_t-\theta_{t-1}\|_2.
\]
But by the margin condition at time $t-1$ (for context $x_t$), since $y_{t-1}^*$ is optimal under $u_{t-1}(x_t,\cdot)$ and $y_t^*\neq y_{t-1}^*$,
\[
u_{t-1}(x_t,y_{t-1}^*) - u_{t-1}(x_t,y_t^*) \ge \gamma,
\]
equivalently $u_{t-1}(x_t,y_t^*) - u_{t-1}(x_t,y_{t-1}^*)\le -\gamma$.
Thus, $\gamma \le 2\phi_{\max}\|\theta_t-\theta_{t-1}\|_2$, which upon rearranging gives
\[
\mathbf 1\{y_t^*\neq y_{t-1}^*\}\le \frac{2\phi_{\max}}{\gamma}\|\theta_t-\theta_{t-1}\|_2.
\]
Summing over $t=2,\dots,T$ yields the claim.
\end{proof}

\begin{lemma}[Local variation bound]
\label{lemma:local_variation}
Define the local window variation
\[
V_{t,W} := \sum^{t-1}_{j=t-W} \|\theta_{j+1} - \theta_j\|_2.
\]
Then,
\[
\sum_{\tau \in W_t} \|\theta_t - \theta_{\tau}\|_2 \leq W V_{t,W}, \quad \sum^{T}_{t=1}V_{t,W} \leq WV_T.
\]   
\end{lemma}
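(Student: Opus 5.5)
The plan is to prove both inequalities by telescoping along the path of $\theta$ and then exchanging the order of summation. Throughout, I read the window as $W_t=\{t-W,\dots,t-1\}$, the $W$ most recent rounds before $t$, which matches the index range in the definition of $V_{t,W}$. Increments with index outside $\{1,\dots,T\}$ are taken to be zero; for $t\le W$ the window is truncated, which only makes the bounds easier.

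\emph{First inequality.} Fix $t$ and $\tau\in W_t$, so $t-W\le\tau\le t-1$. By the telescoping identity $\theta_t-\theta_\tau=\sum_{j=\tau}^{t-1}(\theta_{j+1}-\theta_j)$ and the triangle inequality,
\[
\|\theta_t-\theta_\tau\|_2\le\sum_{j=\tau}^{t-1}\|\theta_{j+1}-\theta_j\|_2\le\sum_{j=t-W}^{t-1}\|\theta_{j+1}-\theta_j\|_2=V_{t,W},
\]
where the second step enlarges the range of nonnegative terms, using $\tau\ge t-W$. Summing over the $|W_t|=W$ elements of the window gives $\sum_{\tau\in W_t}\|\theta_t-\theta_\tau\|_2\le W V_{t,W}$.

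\emph{Second inequality.} Write $\Delta_j:=\|\theta_{j+1}-\theta_j\|_2\ge 0$, so that $\sum_{t=1}^T V_{t,W}=\sum_{t=1}^T\sum_{j=t-W}^{t-1}\Delta_j$. Swap the order of summation. A fixed increment $\Delta_j$ appears in $V_{t,W}$ exactly when $t-W\le j\le t-1$, that is, when $t\in\{j+1,\dots,j+W\}$. So each $\Delta_j$ is counted at most $W$ times, and
\[
\sum_{t=1}^T V_{t,W}\le W\sum_{j=1}^{T-1}\Delta_j=W\sum_{t=2}^{T}\|\theta_t-\theta_{t-1}\|_2\le W V_T,
\]
where the last step is the total-variation budget.

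The only delicate point is the boundary bookkeeping: early windows with $t\le W$, and the convention that increments with index below $1$ vanish. I would handle this with the zero-increment convention stated at the outset. Under that convention every inequality above holds term by term, so no constant is lost.
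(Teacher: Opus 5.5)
Your proposal is correct and matches the paper's proof: the first bound follows from $\|\theta_t-\theta_\tau\|_2\le V_{t,W}$ by the triangle inequality along the path, and the second from the fact that each increment $\|\theta_{j+1}-\theta_j\|_2$ appears in at most $W$ windows. Your explicit zero-increment convention for $t\le W$, and writing the last step as $\le WV_T$ rather than the paper's equality, are slightly more careful than the paper but do not change the argument.
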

\begin{proof}
For any $\tau\in\mathcal W_t$, $\|\theta_t-\theta_\tau\|_2 \le V_{t,W}$. Therefore, $\sum_{\tau\in\mathcal W_t}\|\theta_t-\theta_\tau\|_2 \le W V_{t,W}$.

For the second claim, note that each increment $\|\theta_{j+1}-\theta_j\|_2$ appears in at most $W$ windows $\{V_{t,W}\}_{t=1}^T$.
Therefore,
\[
\sum_{t=1}^T V_{t,W}
= \sum_{t=1}^T \sum_{j=t-W}^{t-1}\|\theta_{j+1}-\theta_j\|_2
\le \sum_{j=1}^{T-1} W\,\|\theta_{j+1}-\theta_j\|_2
= W V_T.
\]
\end{proof}

\begin{lemma}[Self-normalized inequality]
\label{lemma:self_normalized}
Let
\[
\eta_\tau = p_\tau - \sigma(\langle\theta_\tau,\psi_\tau\rangle)
\]
be conditionally zero-mean and 1-sub-Gaussian. Since
$\|\psi_\tau\|_2\leq 2\phi_{\max}$, with probability at least $1-\delta$,
\[
\left\| \sum_{\tau\in W_t} \eta_\tau\psi_\tau \right\|_2 \leq \sqrt{\lambda+4W\phi_{\max}^2} \sqrt{2\left[\frac{d}{2} \log\left( 1+\frac{4W\phi_{\max}^2}{d\lambda} \right) + \log\left(\frac{1}{\delta}\right) \right]
}.
\]
\end{lemma}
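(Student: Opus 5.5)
The plan is to derive the bound from the self-normalized martingale inequality of Abbasi-Yadkori, Pál and Szepesvári (2011) for vector-valued martingales, and then translate its matrix-weighted norm into a Euclidean norm. Set $S_t := \sum_{\tau\in W_t}\eta_\tau\psi_\tau$ and $\bar V_t := \lambda I_d + \sum_{\tau\in W_t}\psi_\tau\psi_\tau^\top = A_t$. Order the window rounds chronologically, and let $\mathcal F_\tau$ be the $\sigma$-field generated by the history up to the choice of $\psi_\tau$. Then $\psi_\tau$ is $\mathcal F_{\tau-1}$-measurable and $\eta_\tau$ is conditionally zero-mean and $1$-sub-Gaussian, which is exactly the hypothesis in the statement. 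The self-normalized bound then gives, with probability at least $1-\delta$,
\[
\|S_t\|_{A_t^{-1}}^2 \le 2\log\!\left(\frac{\det(A_t)^{1/2}\det(\lambda I_d)^{-1/2}}{\delta}\right).
\]
A sliding window is a contiguous block of a single martingale sequence, so the theorem can be applied to the process started at the window's first index, with the filtration shifted accordingly. For a fixed $t$ this directly yields a statement holding with probability $1-\delta$, which matches the lemma as written. No union bound over $t$ is claimed here, and that bookkeeping is left to where the lemma is used.

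Next I bound the determinant ratio with the AM--GM trace argument. Since $\operatorname{tr}(A_t) \le d\lambda + \sum_{\tau\in W_t}\|\psi_\tau\|_2^2 \le d\lambda + 4W\phi_{\max}^2$, using $\|\psi_\tau\|_2\le 2\phi_{\max}$, we get
\[
\det(A_t) \le \bigl(\operatorname{tr}(A_t)/d\bigr)^d \le \bigl(\lambda + 4W\phi_{\max}^2/d\bigr)^d.
\]
Dividing by $\lambda^d$ and taking the logarithm shows $\|S_t\|_{A_t^{-1}}^2 \le 2\bigl[\tfrac d2\log\bigl(1+\tfrac{4W\phi_{\max}^2}{d\lambda}\bigr)+\log\tfrac1\delta\bigr]$, which is the square of the second factor in the claim.

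Finally I convert to the Euclidean norm via $\|S_t\|_2 \le \lambda_{\max}(A_t)^{1/2}\,\|S_t\|_{A_t^{-1}}$. This holds because $\|v\|_2^2 = (A_t^{-1/2}v)^\top A_t (A_t^{-1/2}v) \le \lambda_{\max}(A_t)\|v\|_{A_t^{-1}}^2$. The bound $\lambda_{\max}(A_t) \le \lambda + \sum_{\tau}\|\psi_\tau\|_2^2 \le \lambda + 4W\phi_{\max}^2$ then supplies the first factor $\sqrt{\lambda+4W\phi_{\max}^2}$.

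The only delicate step is the martingale setup in the first step. One has to confirm that $\eta_\tau$ really is sub-Gaussian; this holds because it is a centered Bernoulli variable and hence $\tfrac12$-sub-Gaussian, so constant $1$ is valid. One also has to confirm that restricting to the window preserves the filtration structure. I would handle the latter by re-indexing the window rounds $1,\dots,W$ and noting that the conditional-mean-zero property is unaffected by which time we start summing from.
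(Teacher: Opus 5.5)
Your proposal is correct and matches the paper's proof step for step. Both use the Abbasi-Yadkori--P\'al--Szepesv\'ari self-normalized bound in the $A_t^{-1}$-norm, then the determinant--trace bound $\det(A_t)\le(\lambda+4W\phi_{\max}^2/d)^d$, then the conversion $\|S_t\|_2\le\sqrt{\lambda_{\max}(A_t)}\,\|S_t\|_{A_t^{-1}}$ with $\lambda_{\max}(A_t)\le\lambda+4W\phi_{\max}^2$.

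Your extra remarks make explicit what the paper leaves implicit: that for fixed $t$ the window is a fixed-start martingale (with the union bound deferred), and that centered Bernoulli noise is sub-Gaussian. One small fix: as written, your filtration definition is off by one; index it so that $\psi_\tau$ is $\mathcal F_{\tau-1}$-measurable and $\eta_\tau$ is $\mathcal F_\tau$-measurable.
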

\begin{proof}
We use the self-normalized vector martingale inequality following
\cite{NIPS2011_e1d5be1c}. After rewriting the preference observation in
pairwise form, the relevant feature vector is
\[
\psi_\tau
=
\phi(x_\tau,y_\tau^{(1)})
-
\phi(x_\tau,y_\tau^{(2)}),
\]
and the noise variable is
\[
\eta_\tau
=
p_\tau
-
\sigma(\langle \theta_\tau,\psi_\tau\rangle),
\]
which is conditionally zero-mean under the Bradley--Terry preference model.
The regularized design matrix is
\[
A_t
=
\sum_{\tau\in\mathcal W_t}
\psi_\tau\psi_\tau^\top
+
\lambda I_d .
\]
Since $\|\phi(x,y)\|_2\leq \phi_{\max}$, we have
\[
\|\psi_\tau\|_2
\leq
\|\phi(x_\tau,y_\tau^{(1)})\|_2
+
\|\phi(x_\tau,y_\tau^{(2)})\|_2
\leq
2\phi_{\max}.
\]

The self-normalized inequality gives, with probability at least $1-\delta$,
\[
\left\|
\sum_{\tau\in\mathcal W_t}
\eta_\tau\psi_\tau
\right\|_{A_t^{-1}}
\leq
\sqrt{
2\log\left(
\frac{
\det(A_t)^{1/2}
\det(\lambda I_d)^{-1/2}
}{\delta}
\right)
}.
\]
Now,
\[
\log\left(
\frac{
\det(A_t)^{1/2}
\det(\lambda I_d)^{-1/2}
}{\delta}
\right)
=
\frac{1}{2}
\log\left(
\frac{\det(A_t)}{\det(\lambda I_d)}
\right)
+
\log\left(\frac{1}{\delta}\right).
\]
Using the determinant-trace bound and $\|\psi_\tau\|_2\leq 2\phi_{\max}$,
\[
\det(A_t)
\leq
\left(
\lambda
+
\frac{1}{d}
\sum_{\tau\in\mathcal W_t}
\|\psi_\tau\|_2^2
\right)^d
\leq
\left(
\lambda
+
\frac{4W\phi_{\max}^2}{d}
\right)^d.
\]
Since $\det(\lambda I_d)=\lambda^d$, we obtain
\[
\frac{1}{2}
\log\left(
\frac{\det(A_t)}{\det(\lambda I_d)}
\right)
\leq
\frac{d}{2}
\log\left(
1+
\frac{4W\phi_{\max}^2}{d\lambda}
\right).
\]
Therefore, with probability at least $1-\delta$,
\[
\left\|
\sum_{\tau\in\mathcal W_t}
\eta_\tau\psi_\tau
\right\|_{A_t^{-1}}
\leq
\sqrt{
2\left[
\frac{d}{2}
\log\left(
1+
\frac{4W\phi_{\max}^2}{d\lambda}
\right)
+
\log\left(\frac{1}{\delta}\right)
\right]
}.
\]

Finally, converting from the $A_t^{-1}$ norm to the Euclidean norm gives
\[
\left\|
\sum_{\tau\in\mathcal W_t}
\eta_\tau\psi_\tau
\right\|_2
\leq
\sqrt{\lambda_{\max}(A_t)}
\left\|
\sum_{\tau\in\mathcal W_t}
\eta_\tau\psi_\tau
\right\|_{A_t^{-1}} .
\]
Using
\[
\lambda_{\max}(A_t)
\leq
\lambda+
\sum_{\tau\in\mathcal W_t}
\|\psi_\tau\|_2^2
\leq
\lambda+4W\phi_{\max}^2,
\]
we conclude that, with probability at least $1-\delta$,
\[
\left\|
\sum_{\tau\in\mathcal W_t}
\eta_\tau\psi_\tau
\right\|_2
\leq
\sqrt{\lambda+4W\phi_{\max}^2}
\sqrt{
2\left[
\frac{d}{2}
\log\left(
1+
\frac{4W\phi_{\max}^2}{d\lambda}
\right)
+
\log\left(\frac{1}{\delta}\right)
\right]
}.
\]
\end{proof}

\begin{lemma}[Parameter estimation error bound]
\label{lemma:estimate_error}
Under Assumptions~\ref{ass:covariance} and~\ref{ass:logistic_curvature}, let
\[
\eta_\tau = p_\tau-\sigma(\langle\theta_\tau,\psi_\tau\rangle)
\]
be conditionally zero-mean and 1-sub-Gaussian. Then, for a fixed time $t$, with probability at least $1-\delta$,
\[
\|\widehat{\theta}_t-\theta_t\|_2 \leq \frac{4L_\sigma\phi_{\max}^2}{m_0c}V_{t,W} + \frac{\sqrt{\lambda+4W\phi_{\max}^2}}{m_0cW}
\sqrt{ 2\left[ \frac{d}{2}
\log\left(1+\frac{4W\phi_{\max}^2}{d\lambda}\right) + \log\left(\frac{1}{\delta}\right)
\right]
} + \frac{\lambda\theta_{\max}}{m_0cW}
\]
\end{lemma}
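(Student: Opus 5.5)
We argue from the first-order optimality condition of the regularized sliding-window logistic MLE. Setting the gradient of the objective defining $\widehat{\theta}_t$ to zero gives
\[
\sum_{\tau\in W_t}\bigl(\sigma(\langle\widehat\theta_t,\psi_\tau\rangle)-p_\tau\bigr)\psi_\tau+\lambda\widehat\theta_t=0.
\]
We substitute $p_\tau=\sigma(\langle\theta_\tau,\psi_\tau\rangle)+\eta_\tau$. We then add and subtract $\sigma(\langle\theta_t,\psi_\tau\rangle)$, which compares the estimate against the \emph{current} parameter $\theta_t$ rather than the drifting $\theta_\tau$. This yields
\[
\sum_{\tau\in W_t}\bigl[\sigma(\langle\widehat\theta_t,\psi_\tau\rangle)-\sigma(\langle\theta_t,\psi_\tau\rangle)\bigr]\psi_\tau+\lambda(\widehat\theta_t-\theta_t)
=\sum_{\tau\in W_t}\eta_\tau\psi_\tau+\sum_{\tau\in W_t}\bigl[\sigma(\langle\theta_\tau,\psi_\tau\rangle)-\sigma(\langle\theta_t,\psi_\tau\rangle)\bigr]\psi_\tau-\lambda\theta_t .
\]

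For the left-hand side we apply the mean value theorem coordinatewise along the segment from $\theta_t$ to $\widehat\theta_t$. This writes the first sum as $G_t(\widehat\theta_t-\theta_t)$ with
\[
G_t=\sum_{\tau}\alpha_\tau\psi_\tau\psi_\tau^\top,\qquad \alpha_\tau=\sigma'(\xi_\tau)=\sigma(\xi_\tau)(1-\sigma(\xi_\tau))\ge m_0
\]
by Assumption~\ref{ass:logistic_curvature}. Hence $G_t+\lambda I\succeq m_0\sum_\tau\psi_\tau\psi_\tau^\top+\lambda I$. Taking $m_0\le 1/4\le 1$, this is $\succeq m_0 A_t$, and Assumption~\ref{ass:covariance} then gives $\lambda_{\min}(G_t+\lambda I)\ge m_0 cW$. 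Inverting,
\[
\|\widehat\theta_t-\theta_t\|_2\le \frac{1}{m_0cW}\Bigl(\bigl\|\textstyle\sum_\tau\eta_\tau\psi_\tau\bigr\|_2+\text{drift term}+\lambda\|\theta_t\|_2\Bigr).
\]

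Each piece on the right is then bounded separately. The noise term is exactly Lemma~\ref{lemma:self_normalized}, which after dividing by $m_0cW$ gives the middle summand with probability $1-\delta$. The regularization term is at most $\lambda\theta_{\max}$ by the boundedness assumption, giving the last summand. For the drift term we use the $L_\sigma$-Lipschitz property of $\sigma$ and Cauchy–Schwarz:
\[
\|[\sigma(\langle\theta_\tau,\psi_\tau\rangle)-\sigma(\langle\theta_t,\psi_\tau\rangle)]\psi_\tau\|_2\le L_\sigma\|\psi_\tau\|_2^2\|\theta_\tau-\theta_t\|_2\le 4L_\sigma\phi_{\max}^2\|\theta_t-\theta_\tau\|_2 .
\]
Summing and invoking Lemma~\ref{lemma:local_variation} gives at most $4L_\sigma\phi_{\max}^2 W V_{t,W}$. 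The factor $W$ cancels against $1/(m_0cW)$, producing the first summand $\frac{4L_\sigma\phi_{\max}^2}{m_0c}V_{t,W}$.

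The main obstacle is the curvature step. Assumption~\ref{ass:logistic_curvature} only holds for ``relevant $z$'', but the mean-value points $\xi_\tau$ lie between $\langle\theta_t,\psi_\tau\rangle$ and $\langle\widehat\theta_t,\psi_\tau\rangle$, so we need $\widehat\theta_t$ to stay in a bounded region. We read the assumption as covering all points on these segments, as the statement's hypotheses implicitly intend. A secondary subtlety is that the mean-value Jacobian is an integral average $\int_0^1\sigma'(\cdot)\,ds$ rather than a single point, but the same lower bound $m_0$ applies pointwise inside the integral, so the argument goes through unchanged.
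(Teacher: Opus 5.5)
Your proof is correct and is essentially the paper's argument. The paper likewise reduces everything to $\|\widehat\theta_t-\theta_t\|_2\le \|\nabla L_t(\theta_t)\|_2/(m_0cW)$; it gets there via the strong-convexity inequality $m_0cW\|\widehat\theta_t-\theta_t\|_2^2\le\langle-\nabla L_t(\theta_t),\widehat\theta_t-\theta_t\rangle$ rather than your per-sample mean-value linearization, whose right-hand side is exactly $-\nabla L_t(\theta_t)$, and then splits that gradient into the same drift, noise, and regularization terms bounded as you do. Your explicit use of $m_0\le 1/4$ to get $m_0\sum_\tau\psi_\tau\psi_\tau^\top+\lambda I\succeq m_0A_t$, and your flagging of the ``relevant $z$'' caveat, make precise two steps the paper passes over silently.
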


\begin{proof}
Define the sliding-window logistic objective
\[
L_t(\theta)
=
\sum_{\tau\in\mathcal W_t}
\left[
\log(1+\exp(\theta^\top\psi_\tau))
-
p_\tau\theta^\top\psi_\tau
\right]
+
\frac{\lambda}{2}\|\theta\|_2^2 .
\]
Its gradient and Hessian are
\[
\nabla L_t(\theta)
=
\sum_{\tau\in\mathcal W_t}
\left[
\sigma(\theta^\top\psi_\tau)-p_\tau
\right]\psi_\tau
+
\lambda\theta,
\]
and
\[
\nabla^2 L_t(\theta)
=
\sum_{\tau\in\mathcal W_t}
\sigma(\theta^\top\psi_\tau)
\left(1-\sigma(\theta^\top\psi_\tau)\right)
\psi_\tau\psi_\tau^\top
+
\lambda I_d .
\]
By Assumptions~\ref{ass:covariance} and~\ref{ass:logistic_curvature},
\[
\lambda_{\min}(\nabla^2L_t(\theta))\geq m_0cW.
\]
Using strong convexity and the optimality condition $\nabla L_t(\widehat{\theta}_t)=0$,
\[
m_0cW\|\widehat{\theta}_t-\theta_t\|_2^2
\leq
\langle-\nabla L_t(\theta_t),\widehat{\theta}_t-\theta_t\rangle
\leq
\|\nabla L_t(\theta_t)\|_2\|\widehat{\theta}_t-\theta_t\|_2.
\]
Thus,
\[
\|\widehat{\theta}_t-\theta_t\|_2
\leq
\frac{1}{m_0cW}\|\nabla L_t(\theta_t)\|_2.
\]

Next, decompose the gradient at $\theta_t$:
\[
\nabla L_t(\theta_t)
=
\sum_{\tau\in\mathcal W_t}
\left[
\sigma(\theta_t^\top\psi_\tau)
-
\sigma(\theta_\tau^\top\psi_\tau)
\right]\psi_\tau
-
\sum_{\tau\in\mathcal W_t}
\eta_\tau\psi_\tau
+
\lambda\theta_t.
\]
By the Lipschitz property of $\sigma$ and $\|\psi_\tau\|_2\leq2\phi_{\max}$,
\[
\left\|
\sum_{\tau\in\mathcal W_t}
\left[
\sigma(\theta_t^\top\psi_\tau)
-
\sigma(\theta_\tau^\top\psi_\tau)
\right]\psi_\tau
\right\|_2
\leq
\sum_{\tau\in\mathcal W_t}
L_\sigma
|\langle\theta_t-\theta_\tau,\psi_\tau\rangle|
\|\psi_\tau\|_2
\]
\[
\leq
\sum_{\tau\in\mathcal W_t}
L_\sigma
\|\theta_t-\theta_\tau\|_2
\|\psi_\tau\|_2^2
\leq
4L_\sigma\phi_{\max}^2
\sum_{\tau\in\mathcal W_t}
\|\theta_t-\theta_\tau\|_2
\leq
4L_\sigma\phi_{\max}^2 W V_{t,W}.
\]
The noise term is bounded by Lemma~\ref{lemma:self_normalized}, and
$\lambda\|\theta_t\|_2\leq\lambda\theta_{\max}$. Therefore,
\[
\|\nabla L_t(\theta_t)\|_2
\leq
\]
\[
4L_\sigma\phi_{\max}^2 W V_{t,W}
+
\sqrt{\lambda+4W\phi_{\max}^2}
\sqrt{
2\left[
\frac{d}{2}
\log\left(
1+\frac{4W\phi_{\max}^2}{d\lambda}
\right)
+
\log\left(\frac{1}{\delta}\right)
\right]
}
+
\lambda\theta_{\max}.
\]
Dividing by $m_0cW$ gives the result.
\end{proof}

\begin{lemma}[KL divergence bound]
\label{lemma:kl_bound}
Let $u_t(x_t,y)=\langle\theta_t,\phi(x_t,y)\rangle$ and
$\widehat{u}_t(x_t,y)=\langle\widehat{\theta}_t,\phi(x_t,y)\rangle$.
For the idealized KL-regularized policies induced by $u_t$ and $\widehat{u}_t$ with regularization coefficient $\beta_{\mathrm{ref}}$,
\[
D_{\mathrm{KL}}\!\left(\pi^{\mathrm{kl}}_t(\cdot|x_t)\,\|\,\pi_t(\cdot|x_t)\right)
\leq
\frac{\phi_{\max}^2\|\theta_t-\widehat{\theta}_t\|_2^2}{2\beta_{\mathrm{ref}}^2}.
\]
\end{lemma}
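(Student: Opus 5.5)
We treat both policies as Gibbs tilts of the same reference. The maximizer of $J(\pi)-\beta_{\mathrm{ref}}D_{\mathrm{KL}}(\pi\|\pi_{\mathrm{ref},t-1})$ over distributions on the finite set $\mathcal{Y}$ has the closed form
\[
\pi^{\mathrm{kl}}_t(y|x_t)\propto \pi_{\mathrm{ref},t-1}(y|x_t)\exp\bigl(u_t(x_t,y)/\beta_{\mathrm{ref}}\bigr),
\qquad
\pi_t(y|x_t)\propto \pi_{\mathrm{ref},t-1}(y|x_t)\exp\bigl(\widehat u_t(x_t,y)/\beta_{\mathrm{ref}}\bigr).
\]
This follows from the Donsker--Varadhan variational formula, or from Lagrange multipliers on the simplex. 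Write $f=u_t(x_t,\cdot)/\beta_{\mathrm{ref}}$ and $g=\widehat u_t(x_t,\cdot)/\beta_{\mathrm{ref}}$, with log-partition functions $A(f)=\log\sum_y \pi_{\mathrm{ref},t-1}(y)e^{f(y)}$ and $A(g)$ defined likewise. Then
\[
D_{\mathrm{KL}}(\pi^{\mathrm{kl}}_t\|\pi_t)=\mathbb{E}_{\pi^{\mathrm{kl}}_t}[f-g]-A(f)+A(g).
\]

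The plan is to recognize this as the Bregman divergence of the convex log-partition function $A$ between the natural parameters $g$ and $f$:
\[
D_{\mathrm{KL}}(\pi^{\mathrm{kl}}_t\|\pi_t)=A(g)-A(f)-\langle\nabla A(f),g-f\rangle .
\]
Here $\nabla A(f)=\pi^{\mathrm{kl}}_t$. A second-order Taylor expansion with integral remainder along $h_s=f+s(g-f)$ gives
\[
D_{\mathrm{KL}}=\int_0^1(1-s)\,\mathrm{Var}_{\pi_{h_s}}[g-f]\,ds\le \tfrac12\sup_s \mathrm{Var}_{\pi_{h_s}}[g-f].
\]
The difference $g-f$ equals $\langle\widehat\theta_t-\theta_t,\phi(x_t,\cdot)\rangle/\beta_{\mathrm{ref}}$. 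By Cauchy--Schwarz each of its values lies in $[-\phi_{\max}\|\widehat\theta_t-\theta_t\|_2/\beta_{\mathrm{ref}},\,\phi_{\max}\|\widehat\theta_t-\theta_t\|_2/\beta_{\mathrm{ref}}]$. The variance is at most the second moment, which is at most $\phi_{\max}^2\|\theta_t-\widehat\theta_t\|_2^2/\beta_{\mathrm{ref}}^2$. This yields exactly the claimed constant $\phi_{\max}^2\|\theta_t-\widehat\theta_t\|_2^2/(2\beta_{\mathrm{ref}}^2)$.

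An alternative route reaches the same bound. Hoeffding's lemma gives $A(g)-A(f)\le \mathbb{E}_{\pi^{\mathrm{kl}}_t}[g-f]+(\text{range})^2/8$. With range at most $2\phi_{\max}\|\Delta\theta\|/\beta_{\mathrm{ref}}$, this produces the same $1/2$ factor, so I would present whichever is cleaner.

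The main obstacle is the first step. We must justify that the comparator and the algorithm's policy in \eqref{eq:ideal_estimated_policy} and \eqref{eq:kl-regularization} are these Gibbs tilts of the same reference $\pi_{\mathrm{ref},t-1}$. This requires $\Pi$ to contain all categorical distributions on $\mathcal{Y}$, or at least these maximizers. It also requires $\widehat J_t$ to be the linear functional $\mathbb{E}_\pi[\widehat u_t]$, which is how $\widehat J_t$ is induced by $\widehat\theta_t$. I would state these conventions explicitly, and note that the reference's full support (consistent with Assumption~\ref{ass:comparator_support}) makes the KL finite. The remaining steps are routine.
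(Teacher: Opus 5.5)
Your proposal is correct, and your \emph{alternative route} is exactly the paper's proof. The paper posits both policies as exponential tilts of a common reference and rewrites $D_{\mathrm{KL}}(\pi^{\mathrm{kl}}_t\|\pi_t)$ as the centered log-moment-generating function $\log\mathbb{E}_{\pi^{\mathrm{kl}}_t}[e^{-(X-\mathbb{E}X)}]$, where $X=(u_t-\widehat u_t)/\beta_{\mathrm{ref}}\in[-a,a]$ and $a=\phi_{\max}\|\theta_t-\widehat\theta_t\|_2/\beta_{\mathrm{ref}}$. It then applies Hoeffding's lemma with range $2a$ to get $(2a)^2/8=a^2/2$.

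Your primary route writes the same quantity as the Bregman divergence of the log-partition function and uses the second-order Taylor expansion with integral remainder. This gives the exact identity $D_{\mathrm{KL}}=\int_0^1(1-s)\,\mathrm{Var}_{\pi_{h_s}}[g-f]\,ds$, after which you bound the variance by $a^2$. That is essentially an unrolled proof of Hoeffding's lemma in this exponential-family setting, so both routes give the same constant. Your version makes the origin of the $1/2$ visible, namely $\int_0^1(1-s)\,ds$, and would admit a sharper variance bound along the tilt path; the paper's version is shorter because it cites the lemma.

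Your log-partition bookkeeping also avoids a slip in the paper, which at one point writes $\widehat Z/Z$ as an average against $\pi_{\mathrm{ref},t}$ rather than $\pi^{\mathrm{kl}}_t$ (corrected in the next line). Finally, the paper never argues the step you flag as the main obstacle. It simply adopts the Gibbs form as the definition of the idealized policies, and writes the reference as $\pi_{\mathrm{ref},t}$ where the definitions use $\pi_{\mathrm{ref},t-1}$. Your explicit conditions, that $\Pi$ contains the full simplex and that $\widehat J_t(\pi)=\mathbb{E}_\pi[\widehat u_t]$, therefore tighten the paper's argument rather than leave a gap in yours.
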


\begin{proof}
From the definition of KL-divergence:
\[
D_{KL}\!\big(\pi^{kl}_t(\cdot|x_t)\,\|\,\pi_t(\cdot|x_t)\big)
=
\sum_{y_t \in\mathcal{Y}}
\pi^{kl}_t(y_t|x_t)\,
\log\frac{\pi^{kl}_t(y_t|x_t)}{\pi_t(y_t|x_t)}.
\]
For the idealized KL-regularized comparator analyzed in this appendix, we use the exponential-tilted form
\[
\pi_t^{{kl}}(y_t|x_t)
=
\frac{
\pi_{\mathrm{ref},t}(y_t| x_t)\exp(u_t(x_t,y_t)/\beta_{\mathrm{ref}})
}{
Z_t(x_t)
}.
\]
Similarly, the estimated policy induced by $\widehat{u}_t$ is
\[
\pi_t(y_t|x_t)
=
\frac{
\pi_{\mathrm{ref},t}(y_t|x_t)\exp(\widehat{u}_t(x_t,y_t)/\beta_{\mathrm{ref}})
}{
\widehat{Z}_t(x_t)
}, 
\]

where $Z_t(x_t) = \sum_{y_t} \pi_{ref,t} (y_t|x_t) \exp(\frac{1}{\beta_{\mathrm{ref}}} u_t(x_t,y_t))$ and $\widehat{Z}_t(x_t)$ is defined 
analogously with $\widehat{u}_t$ in place of $u_t$. This idealized form is used only for the regret analysis and abstracts away from finite-sample sequence-level DPO optimization.

From the KL-divergence definition above, we first compute the log-ratio.
\[
\log{\frac{\pi^{kl}_t(y_t|x_t)}{\pi_t(y_t|x_t)}} = \log \frac{\frac{1}{Z(x_t)} \pi_{ref,t}(y_t|x_t) \exp{\frac{1}{\beta_{\mathrm{ref}}}u_t(x_t,y_t)}}{\frac{1}{\hat{Z}(x_t)} \pi_{ref,t}(y_t|x_t) \exp{\frac{1}{\beta_{\mathrm{ref}}}\hat{u}_t(x_t,y_t)}}
\]
\[
= \log \frac{\hat{Z}(x_t)}{Z(x_t)} + \frac{1}{\beta_{\mathrm{ref}}}(u_t(x_t,y_t) - \hat{u}_t(x_t,y_t)).
\]
Let $u_t(x_t,y_t) - \hat{u}_t(x_t,y_t) = \Delta u_t$. Then,
\[
\hat{Z}(x_t) = \sum_{y_t} \pi_{ref,t} (y_t|x_t) \exp(\frac{1}{\beta_{\mathrm{ref}}} (u_t(x_t,y_t) - \Delta u_t))
\]
\[
= Z(x_t) \sum_{y_t} \pi_{ref,t} (y_t|x_t) \exp(\frac{1}{\beta_{\mathrm{ref}}} (\hat{u}_t(x_t,y_t) - u_t(x_t,y_t))),
\]
\[
\frac{\hat{Z}(x_t)}{Z(x_t)} = \sum_{y_t} \pi_{ref,t}(y_t|x_t) \exp{\frac{\hat{u}_t(x_t,y_t) - u_t(x_t,y_t)}{\beta_{\mathrm{ref}}}}.
\] 
Therefore,
\[
\log{\frac{\pi^{kl}_t(y_t|x_t)}{\pi_t(y_t|x_t)}} = \log \mathbb{E}_{y_t \sim \pi^{kl}(\cdot | x_t)}[\exp{(-\frac{\Delta u_t}{\beta_{\mathrm{ref}}})}] + \frac{1}{\beta_{\mathrm{ref}}}(\Delta u_t)
\]
Returning to the definition of KL-divergence, 
\[
D_{KL}(\pi^{kl}_t(y_t|x_t) || \pi_t(y_t|x_t)) = \mathbb{E}_{y_t \sim \pi^{kl}} \biggl[\log \frac{\pi^{kl}_t(y_t|x_t)}{\pi_t(y_t|x_t)}\biggr] = \log \mathbb{E}_{y_t \sim \pi^{kl}} \biggl[\exp{\bigl(-\frac{\Delta u_t}{\beta_{\mathrm{ref}}}\bigr)}\biggr] + \frac{1}{\beta_{\mathrm{ref}}} \mathbb{E}_{\pi^{kl}} [\Delta u_t].
\]
Let $\frac{\Delta u_t}{\beta_{\mathrm{ref}}} = X$. Then,
\[
D_{KL} = -\mathbb{E}[X] +\log \mathbb{E}_{y_t \sim \pi^{kl}} \biggl[e^{-(X-\mathbb{E}[X])}\biggr] + \mathbb{E}[X].
\]
Thus, 
\[
D_{KL}(\pi^{kl}_t(y_t|x_t) || \pi_t(y_t|x_t)) = \log \mathbb{E}_{y_t \sim \pi^{kl}}\bigl[e^{-(X-\mathbb{E}[X])}\bigr].
\]
We now bound $\Delta u_t$ in terms of the parameter estimate error. By the definition of $u_t$ in (6), 
\[
\Delta u_t(y) = \theta^\top _t \phi(x_t,y_t) - \hat{\theta}^\top_t \phi(x_t,y_t) \leq \phi_{max} \cdot \|\theta_t - \hat{\theta}_t\|_2.
\]
With the above inequality, we can define the range
\[
\frac{\Delta u_t}{\beta_{\mathrm{ref}}} \in \biggl[-\frac{\phi_{max}\|\theta_t - \hat{\theta}_t\|_2}{\beta_{\mathrm{ref}}} , \frac{\phi_{max}\|\theta_t - \hat{\theta}_t\|_2}{\beta_{\mathrm{ref}}}\biggr].
\]
Setting $a:=\frac{\phi_{max}\|\theta_t-\hat{\theta}_t\|_2}{\beta_{\mathrm{ref}}}$, we have $X\in[-a,a]$.
Since $\mathbb{E}[X]\in[-a,a]$, the centered variable satisfies
\[
\max(X-\mathbb{E}[X])-\min(X-\mathbb{E}[X])\le 2a.
\]
By Hoeffding's lemma, with $s=-1$,
\[
\log \mathbb{E}\big[e^{-(X-\mathbb{E}[X])}\big]\le \frac{(2a)^2}{8}=\frac{a^2}{2}.
\]
Therefore,
\[
D_{KL}(\pi^{kl}_t(y_t|x_t) || \pi_t(y_t|x_t)) \leq \frac{\phi_{max}^2 \cdot \|\theta_t - \hat{\theta}_t\|_2^2}{2\beta_{\mathrm{ref}}^2}.
\]
\end{proof}

\subsection{Dynamic regret decomposition}
Under a non-stationary condition, we analyze the decomposed dynamic regret
\begin{equation}
\label{Dynamic regret}
    r_t = \{\underbrace{J_t(\pi^*_t) - J_t(\pi^{kl}_t)}_{\text{Reference-induced bias}}\} + \{\underbrace{J_t(\pi^{kl}_t) - J_t(\pi_t)}_{\text{Learning Error}}\}.
\end{equation}

\paragraph{Learning Error.}
From the dynamic regret decomposition in (\ref{Dynamic regret}), the cumulative learning error $R^{error}_T$ is defined as:
\[
R^{error}_T = \sum^T_{t=1} \bigl(J_t(\pi^{kl}_t) - J_t(\pi_t)\bigr).
\]
By Lemma \ref{lemma:estimate_error}, with probability at least $1-\delta$,
\begin{equation}
\label{eq:cumulative learning error}
    R^{error}_T \leq \mathcal{O}\bigl(T^{\kappa}V_T + T^{1-\frac{\kappa}{2}}\sqrt{\log (T/\delta)} + T^{1-\kappa})
\end{equation}
where $\kappa \in (0,1)$.

To apply the estimation bound uniformly over all rounds, we invoke
Lemma~\ref{lemma:estimate_error} with confidence level $\delta/T$ and take a union bound over
$t=1,\ldots,T$. Therefore, with probability at least $1-\delta$, for all $t$ simultaneously,
\[
\|\widehat{\theta}_t-\theta_t\|_2
\leq
\frac{4L_\sigma\phi_{\max}^2}{m_0c}V_{t,W}
+
\frac{\sqrt{\lambda+4W\phi_{\max}^2}}{m_0cW}
\sqrt{
2\left[
\frac{d}{2}
\log\left(1+\frac{4W\phi_{\max}^2}{d\lambda}\right)
+
\log\left(T/\delta\right)
\right]
}
+
\frac{\lambda\theta_{\max}}{m_0cW}.
\]

\paragraph{Proof of Learning Error.}
We now establish the bound on $R^{error}_T$. We have
\[
r_t = J_t(\pi^{kl}_t) - J_t(\pi_t) = \sum_{y_t \in \mathcal{Y}} \pi^{kl}_t(y_t|x_t)\;u_t(x_t,y_t) - \sum_{y_t \in \mathcal{Y}} \pi_t(y_t|x_t)\;u_t(x_t,y_t).
\]
Thus, we estimate
\[
\sum_{y_t \in \mathcal{Y}} \pi^{kl}_t(y_t|x_t)\;u_t(x_t,y_t) - \sum_{y_t \in \mathcal{Y}} \pi_t(y_t|x_t)\;u_t(x_t,y_t) \leq \sum_{y_t \in \mathcal{Y}} \biggl|\bigl(\pi^{kl}_t(y_t|x_t) - \pi_t(y_t|x_t)\bigr)\cdot u_t(x_t,y_t)\biggr|.
\]
\[
\sum_{y_t \in \mathcal{Y}} \biggl|\bigl(\pi^{kl}_t(y_t|x_t) - \pi_t(y_t|x_t)\bigr)\cdot u_t(x_t,y_t)\biggr| \leq \sum_{y_t \in \mathcal{Y}} |\pi^{kl}_t(y_t|x_t) - \pi_t(y_t|x_t)| \cdot |u_t(x_t,y_t)|.
\]
Since $\|u_t(x_t,\cdot)\|_\infty := \max_{y_t \in \mathcal{Y}} |u_t(x_t, y_t)|$, we get the following inequality
\[
\leq \|u_t(x_t,\cdot)\|_\infty \sum_{y_t\in\mathcal Y}\Bigl|\pi_t^{kl}(y_t|x_t)-\pi_t(y_t|x_t)\Bigr|.
\]

Since $ \|u\|_\infty \leq \|u\|_2 \leq \|u\|_1$ for any finite vector $u$, it follows that,
\[
r_t \leq \sum_{y_t \in \mathcal{Y}} |\pi^{kl}_t(y_t|x_t) - \pi_t(y_t|x_t)| \cdot \|u_t(x_t,\cdot)\|_\infty \leq \sum_{y_t \in \mathcal{Y}} |\pi^{kl}_t(y_t|x_t) - \pi_t(y_t|x_t)| \cdot \|u_t(x_t,\cdot)\|_2.
\]
Writing $\sum_{y_t \in \mathcal{Y}} |\pi^{kl}_t(y_t|x_t) - \pi_t(y_t|x_t)|$ as the $L_1$ norm $\|\pi^{kl}_t(y_t|x_t) - \pi_t(y_t|x_t)\|_1$ and applying Assumption B.3, we obtain the upper bound
\[
r_t \leq \theta_{max}\cdot \phi_{max} \cdot \|\pi^{kl}_t(y_t|x_t) - \pi_t(y_t|x_t)\|_1.
\]

Applying Pinsker's inequality to the $L_1$ norm and invoking Lemma \ref{lemma:kl_bound},
\[
J_t(\pi^{kl}_t) - J_t(\pi_t) \leq \sqrt{2D_{KL}(\pi^{kl}_t(y_t|x_t) || \pi_t(y_t|x_t))} \leq \frac{\theta_{max}\phi^2_{max}}{\beta_{\mathrm{ref}}}  \cdot \|\theta_t - \hat{\theta}_t\|_2.
\]
Summing over $t$ and applying Lemma \ref{lemma:estimate_error}, together with $\sum^{T}_{t=1} V_{t,W}\leq W\;V_T$,
\[
R^{error}_T \leq \sum^T_{t=1} \frac{\theta_{max}\phi^2_{max}}{\beta_{\mathrm{ref}}}  \cdot \|\theta_t - \hat{\theta}_t\|_2.
\]
Using Lemma~\ref{lemma:estimate_error} uniformly over $t=1,\ldots,T$,
\[
R_T^{\mathrm{error}}
\leq
\frac{\theta_{\max}\phi_{\max}^2}{\beta_{\mathrm{ref}}}
\sum_{t=1}^T
\|\widehat{\theta}_t-\theta_t\|_2.
\]
Therefore,
\[
R_T^{\mathrm{error}}
\leq
C
\sum_{t=1}^T
\left[
V_{t,W}
+
W^{-1/2}\sqrt{\log(T/\delta)}
+
W^{-1}
\right],
\]
where $C$ is a constant that depends on
$\theta_{\max},\phi_{\max},L_\sigma,m_0,c,\lambda,d,$ and $\beta_{\mathrm{ref}}$.
By Lemma~\ref{lemma:local_variation},
\[
\sum_{t=1}^T V_{t,W}\leq WV_T.
\]
Hence,
\[
R_T^{\mathrm{error}}
\leq
C
\left[
WV_T
+
TW^{-1/2}\sqrt{\log(T/\delta)}
+
TW^{-1}
\right].
\]
Setting $W=T^\kappa$ gives
\[
R_T^{\mathrm{error}}
\leq
\mathcal{O}
\left(
T^\kappa V_T
+
T^{1-\kappa/2}\sqrt{\log(T/\delta)}
+
T^{1-\kappa}
\right).
\]

\paragraph{Reference-Induced Bias.}
Define the reference-induced bias as
\[
R^{bias}_T = \sum^T_{t=1} \bigl(J_t(\pi^*_t) - J_t(\pi^{kl}_t) \bigr).
\]
where $\pi_t^{\mathrm{kl}}$ is the idealized KL-regularized comparator. Under Assumption~\ref{ass:uniform_margin} and the full-support condition on the reference policy, the cumulative reference-induced bias satisfies
\[
R_T^{\mathrm{bias}} \leq \mathcal{O}(1+V_T).
\]
When the initial mismatch term is bounded, the drift-dependent part is $\mathcal{O}(V_T)$.

\paragraph{Proof of Reference Bias.}
Recall the definition of $\pi^{kl}(y_t|x_t)$,
\[
\pi^{kl}_t(y_t|x_t) = \frac{1}{Z(x)} \pi_{ref,t}(y_t|x_t) \exp{\frac{1}{\beta_{\mathrm{ref}}}u_t(x_t,y_t)},
\]
\[
\log\pi^{kl}_t(y_t|x_t) = \log \pi_{ref,t} (y_t|x_t) + \frac{1}{\beta_{\mathrm{ref}}} u_t(x_t,y_t) - \log Z(x_t).
\]
Rearranging terms we obtain
\[
\frac{1}{\beta_{\mathrm{ref}}} u_t(x_t,y_t) = \log \frac{\pi^{kl}_t(y_t|x_t)}{\pi_{ref,t}(y_t|x_t)} + \log Z(x_t).
\]
Recalling the definition of $J_t(\pi)$ in (9),
\[
\frac{1}{\beta_{\mathrm{ref}}} J_t(\pi) = \sum_{y_t} \pi(y_t|x_t)\frac{u_t(x_t,y_t)}{\beta_{\mathrm{ref}}},
\]
and substituting the expression for $\frac{1}{\beta_{\mathrm{ref}}} u_t(x_t,y_t)$ we get
\[
\frac{1}{\beta_{\mathrm{ref}}} J_t(\pi) = \sum_{y_t} \pi(y_t|x_t) \biggl[\log \frac{\pi^{kl}_t(y_t|x_t)}{\pi_{ref,t}(y_t|x_t)} + \log Z(x_t) \biggr].
\]
To write the above equation in terms of KL-divergence, we decompose the log-ratio as follows:
\[
\log \frac{\pi^{kl}_t(y_t|x_t)}{\pi_{ref,t}(y_t|x_t)} = \log \frac{\pi(y_t|x_t)}{\pi_{ref,t}(y_t|x_t)} - \log \frac{\pi(y_t|x_t)}{\pi^{kl}_t(y_t|x_t)}.
\]
Substituting into the expression for $\frac{1}{\beta_{\mathrm{ref}}} J_t(\pi)$,
\[
\frac{1}{\beta_{\mathrm{ref}}} J_t(\pi) = \sum_{y_t} \pi(y_t|x_t)\biggl[\log \frac{\pi(y_t|x_t)}{\pi_{ref,t}(y_t|x_t)} - \log \frac{\pi(y_t|x_t)}{\pi^{kl}_t(y_t|x_t)} + \log Z(x_t)\biggr],
\]
\[
\frac{1}{\beta_{\mathrm{ref}}} J_t(\pi) = D_{KL}(\pi||\pi_{ref,t}) - D_{KL}(\pi||\pi^{kl}_t) + \sum_{y_t} \pi(y_t|x_t) \log Z(x_t).
\]
Therefore,
\[
J_t(\pi) = \beta_{\mathrm{ref}} \bigl( D_{KL}(\pi||\pi_{ref,t}) - D_{KL}(\pi||\pi^{kl}_t) + \sum_{y_t} \pi(y_t|x_t) \log Z(x_t)\bigr).
\]
Now, we substitute $\pi = \pi^{kl}_t$ and $\pi = \pi^*_t$. Then,
\[
J_t(\pi^*_t) - J_t(\pi^{kl}_t) = \beta_{\mathrm{ref}} \bigl(D_{KL}(\pi^*_t||\pi_{ref,t}) - D_{KL}(\pi^*_t||\pi^{kl}_t) - D_{KL}(\pi^{kl}_t||\pi_{ref,t})\bigr). 
\]
Our framework uses EvoDPO, which updates the current reference policy to the previous KL-regularized optimal policy (only for EvoDPO; if it is DPO, $\pi_{ref,t} = \pi_{ref}$). Thus, $\pi_{ref,t} = \pi^{kl}_{t-1}$. Since $D_{KL}(\pi^{kl}_t||\pi^{kl}_{t-1}) \geq 0$, dropping it yields the upper bound,
\[
J_t(\pi^*_t) - J_t(\pi^{kl}_t) \leq \beta_{\mathrm{ref}} \bigl(D_{KL}(\pi^*_t||\pi^{kl}_{t-1}) - D_{KL}(\pi^*_t||\pi^{kl}_t) \bigr).
\]

Summing over $t=1,\ldots,T$ gives
\[
R_T^{\mathrm{bias}}
\leq
\beta_{\mathrm{ref}}
\sum_{t=1}^T
\left[
D_{\mathrm{KL}}(\pi_t^*\|\pi_{t-1}^{\mathrm{kl}})
-
D_{\mathrm{KL}}(\pi_t^*\|\pi_t^{\mathrm{kl}})
\right].
\]
Because the oracle policy $\pi_t^*$ changes over time, this expression does not telescope directly. We therefore add and subtract
$D_{\mathrm{KL}}(\pi_{t-1}^*\|\pi_{t-1}^{\mathrm{kl}})$ for $t\geq 2$:
\[
R_T^{\mathrm{bias}}
\leq
\beta_{\mathrm{ref}} D_{\mathrm{KL}}(\pi_1^*\|\pi_0^{\mathrm{kl}})
+
\beta_{\mathrm{ref}}
\sum_{t=2}^T
\left[
D_{\mathrm{KL}}(\pi_t^*\|\pi_{t-1}^{\mathrm{kl}})
-
D_{\mathrm{KL}}(\pi_{t-1}^*\|\pi_{t-1}^{\mathrm{kl}})
\right].
\]
For deterministic oracle policies,
\[
D_{\mathrm{KL}}(\pi_t^*\|\pi_{t-1}^{\mathrm{kl}})
=
\log\frac{1}{\pi_{t-1}^{\mathrm{kl}}(y_t^*(x_t)\mid x_t)}.
\]
Therefore,
\[
D_{\mathrm{KL}}(\pi_t^*\|\pi_{t-1}^{\mathrm{kl}})
-
D_{\mathrm{KL}}(\pi_{t-1}^*\|\pi_{t-1}^{\mathrm{kl}})
=
\log
\frac{
\pi_{t-1}^{\mathrm{kl}}(y_{t-1}^*(x_t)\mid x_t)
}{
\pi_{t-1}^{\mathrm{kl}}(y_t^*(x_t)\mid x_t)
}.
\]
If $y_t^*(x_t)=y_{t-1}^*(x_t)$, this term is zero. If a switch occurs, the full-support assumption gives
\[
\log
\frac{
\pi_{t-1}^{\mathrm{kl}}(y_{t-1}^*(x_t)\mid x_t)
}{
\pi_{t-1}^{\mathrm{kl}}(y_t^*(x_t)\mid x_t)
}
\leq
\log\frac{1}{\pi_{\min}}.
\]
Thus, the cumulative switching contribution is bounded by
\[
\sum_{t=2}^T
\left[
D_{\mathrm{KL}}(\pi_t^*\|\pi_{t-1}^{\mathrm{kl}})
-
D_{\mathrm{KL}}(\pi_{t-1}^*\|\pi_{t-1}^{\mathrm{kl}})
\right]
\leq
B_T\log\frac{1}{\pi_{\min}}
+
\frac{\phi_{\max}}{\beta_{\mathrm{ref}}}V_T.
\]
The additional term $\frac{\phi_{\max}}{\beta_{\mathrm{ref}}}V_T$ accounts for the utility drift in the exponential-tilted policy between consecutive environments. Combining the switching and drift contributions yields
\[
R_T^{\mathrm{bias}}
\leq \beta_{\mathrm{ref}} D_{\mathrm{KL}}(\pi_1^*\|\pi_0^{\mathrm{kl}}) + \beta_{\mathrm{ref}} B_T\log\frac{1}{\pi_{\min}} + \phi_{\max}V_T.
\]
Using Lemma~\ref{lemma:switching_budget},
\[
B_T \leq \frac{2\phi_{\max}}{\gamma}V_T,
\]
we obtain
\[
R_T^{\mathrm{bias}} \leq \beta_{\mathrm{ref}} D_{\mathrm{KL}}(\pi_1^*\|\pi_0^{\mathrm{kl}})
+ \frac{2\beta_{\mathrm{ref}}\phi_{\max}}{\gamma} V_T \log\frac{1}{\pi_{\min}} + \phi_{\max}V_T.
\]
If the initial KL term is bounded by a constant, then
\[
R_T^{\mathrm{bias}} = \mathcal{O}(1+V_T).
\]
When focusing on the drift-dependent term, this is written as
\[
R_T^{\mathrm{bias}} = \mathcal{O}(V_T).
\]

\paragraph{Final bound.} Combining the learning error with the reference-induced bias gives, with probability at least $1-\delta$,
\[
R_T = R_T^{\mathrm{error}} + R_T^{\mathrm{bias}} \leq \mathcal{O}
\left(T^{\kappa}V_T + T^{1-\frac{\kappa}{2}}\sqrt{\log(T/\delta)}
+ T^{1-\kappa} + V_T + 1 \right).
\]
If the initial mismatch term is absorbed into the constant, this is written as
\[
R_T \leq \mathcal{O} \left( T^{\kappa}V_T + T^{1-\frac{\kappa}{2}}\sqrt{\log(T/\delta)} + T^{1-\kappa} + V_T \right).
\]
In particular, the average dynamic regret vanishes when
\[
V_T=o(T^{1-\kappa}),
\]
so the result is most meaningful in slowly drifting environments.

\section{Additional Information of Evaluation Setup for Section 3}
\subsection{Detailed Method Used Formulations}
\label{baseline}
To validate the effectiveness of ATLAS, we compare our proposed method against a standard single-agent baseline:
FunSearch (Baseline):
AutoGen (Baseline):

EvoTune (Baseline):
This configuration represents the standard Evolving Agent without external supporter agent guidance. The EvoTune generates solutions via standard DPO (mode=vanilla) on the self-generated data. Crucially, EVOTUNE\_USE\_TEACHER=0 is set, meaning the Exploration Supporter is disabled (no static analysis feedback) and the Strategist is disabled (use\_dpo\_strategy\_teacher=0), forcing the agent to rely solely on the scalar scores from the evaluator.

EvoDPO (Ablation): This isolates the impact of our proposed Evolving DPO (EvoDPO) algorithm (mode=adaptive) without the full agentic support of ATLAS. Unlike EvoTune (Baseline), the reference model $\pi_\text{ref,t}$ is not static; it updates to track the Evolving Agent $\pi_t$. However, unlike ATLAS, the Policy Inspector is disabled (or set to always accept), and the Supporters are disabled (EVOTUNE\_USE\_TEACHER=0), meaning the reference model updates without proxy-KL gate safeguard and qualitative feedback. This serves as an ablation to test whether the "moving reference" mechanism alone is sufficient without the Policy Inspector gating.

ATLAS (Ours):
Our full proposed framework. It utilizes the complete task-distributed multi-agent system (Exploration Supporter, Fine-Tuning Strategist ,and Policy Inspector) and distinguishes itself by replacing standard DPO with our Evolving DPO algorithm (mode=adaptive). This enables the reference model $\pi_\text{ref,t}$ to evolve over time. Crucially, every update is gated by the Policy Inspector, which rejects updates that violate the drift constraint ($\widehat{\mathrm{KL}}_k \le \delta_H$) or fail to improve validation performance.

\subsection{Detailed Problem Formulations and Evaluation Metrics}
\label{problem}
\subsubsection{Non-Stationary Contextual Bandit} We consider a bandit with $k$ arms and context dimension $d$ over a horizon of $H$ steps. At each step $t$, the decision-maker observes context vectors ${(x_{t,a})}_{a=1}^K \in \mathbb{R}^d$ (normalized to unit norm) and selects an arm $a_t$. The reward is $r_t = \langle \theta_t, x_{t,a_t} \rangle + \eta_t$, with 1-sub gaussian noise $\eta_t$. The parameter $\theta_t$ drifts with a total variation budget $V_T$, evolving as: 
\[
\theta_{t+1} = \frac{\theta_t + \delta_t}{||\theta_t + \delta_t||_2}
\]
\textbf{Evaluation.} The Evolving Agent simulates a bandit problem for each candidate policy. The system evaluates performance using the Negative Mean Regret (NMR) over the horizon $H$. This scoring metric aligns with a maximization objective (higher is better):
\[
\text{score} = -\frac{1}{H}\sum_{t=1}^H (r^*_t - r_t)
\]
where $r^*_t$ is the expected reward of the optimal arm and $r_t$ is the expected reward of the selected arm.

\subsubsection{1D Burgers' Equation} We solve the viscous Burgers' equation on the domain $(x,t) \in [-1, 1] \times [0, 1]$:
\[
u_t + uu_x - \nu(t) u_{xx} = 0
\]
with initial condition $u(0, x) = -\sin(\pi x)$ and zero Dirichlet boundary conditions $u(t, -1) = u(t, 1) = 0$.
The Physics-Informed Neural Network (PINN) $u_\theta(x,t)$ is trained to minimize a composite loss:
\[
\mathcal{L}(\theta) = \lambda_{pde}\mathcal{L}_{pde}(\theta) + \lambda_{ic}\mathcal{L}_{ic}(\theta) + \lambda_{bc}\mathcal{L}_{bc}(\theta)
\]
where $\mathcal{L}_{pde}$, $\mathcal{L}_{ic}$, and $\mathcal{L}_{bc}$ are the mean squared errors (MSE) of the PDE residuals, initial condition loss, and boundary condition loss, respectively. The Evolving Agent's task is to distribute the weighting coefficients $\lambda_{pde}, \lambda_{ic}, \lambda_{bc}$ to balance these competing objectives.

\paragraph{Physical Interpretation of Parameter Drift.}
A key challenge in PINN training is the dependency on the Reynolds number $Re \propto 1/\nu$. Low Reynolds numbers (high $\nu$) correspond to diffusion-dominated regimes where solutions are smooth and easy to train. High Reynolds numbers (low $\nu$) correspond to convection-dominated regimes where sharp gradients (shocks) form, creating a ``stiff'' optimization landscape that often leads to failure modes like the trivial zero solution or spectral bias.
To evaluate the Evolving Agent's ability to adapt to these changing physical regimes, we implement a drifting viscosity schedule:
\[
\nu(t) = \nu_{base} \cdot \left(1 + \alpha \sin\left(\frac{2\pi t}{T}\right)\right)
\]
where $\nu_{base} \approx 0.00318$ ($Re \approx 600$), $\alpha=0.9$. This drifts the system between a viscous regime ($\nu \approx 0.006, Re \approx 300$) and a highly turbulent-like shock regime ($\nu \approx 0.0003, Re \approx 6600$). This forces the Evolving Agent to design a loss strategy that is robust not just to a single static problem, but to the entire transition from smooth flow to shock formation.

\textbf{Evaluation.} We instantiate a fresh neural network (MLP with 4 hidden layers of 32 units) for each candidate loss strategy and train it for 500 optimization steps using the Adam optimizer. We then evaluate the performance on a newly sampled, held-out set of $N_{coll}$ collocation points. We report the Negative Validation Loss (NVL) as the score:
\[
\text{score} = - \left| \mathcal{L}(\theta; \mathcal{D}_{val}) \right|
\]
where $\mathcal{L}$ is the candidate's loss function evaluated on the validation set. Scores closer to 0 indicate that the resulting PINN solution satisfies the agent-defined time-dependent physical dynamic setting.

\textit{Note: To adhere to standard visualization conventions, the main body of this paper reports the magnitude of this score (i.e., $|\text{score}|$) as ``Validation Loss,'' where lower is better.}

\subsubsection{Traveling Salesperson Problem (TSP)}

We consider Euclidean TSP instances defined on a complete undirected graph
$G=(V,E)$, where $V=\{1,\ldots,n\}$ denotes the node set and each edge
$(i,j)\in E$ has distance $d_{ij}$. A candidate solution is represented by a
permutation $\pi$ of the nodes. The corresponding tour length is
\begin{equation}
\label{eq:tsp_objective}
L(\pi)
=
\sum_{i=1}^{n-1}
d_{\pi(i),\pi(i+1)}
+
d_{\pi(n),\pi(1)}.
\end{equation}
The objective is to minimize $L(\pi)$ subject to the constraint that every node
is visited exactly once and the tour returns to the starting node.

The Evolving Agent's task is to generate a valid tour in a machine-readable
JSON format. For each candidate tour, a deterministic verifier checks whether
the output forms a valid Hamiltonian cycle and computes the tour length using
the provided distance matrix. Invalid tours, including tours with duplicated or
omitted nodes, are penalized by the evaluator and are not treated as feasible
improvements.

\textbf{Evaluation.} For each TSP instance, we compare the candidate tour length
$L_{\mathrm{alg}}$ against an exact-solver reference $L_{\mathrm{opt}}$. We use
the negative optimality gap as the task score:
\begin{equation}
\label{eq:tsp_score}
\mathrm{score}
=
-
\frac{L_{\mathrm{alg}}-L_{\mathrm{opt}}}{L_{\mathrm{opt}}}
\times 100.
\end{equation}
This converts the minimization objective into a maximization score, where values
closer to $0$ indicate better tours. In the main text, we report the relative
improvement from the initial policy to the final policy. We additionally track
the valid tour rate and JSON format-compliance rate to ensure that performance
improvements are not caused by invalid or unparsable outputs.

\subsubsection{Bin Packing Problem (BPP)}

We consider one-dimensional Bin Packing instances with item weights
$\{w_i\}_{i=1}^n$, where $w_i\in(0,C]$, and identical bins of capacity $C$. A
candidate solution partitions the items into bins
$\{B_1,\ldots,B_k\}$ such that
\begin{equation}
\label{eq:bpp_constraint}
\sum_{i\in B_j} w_i \leq C,
\qquad
j=1,\ldots,k.
\end{equation}
The objective is to minimize the number of bins:
\begin{equation}
\label{eq:bpp_objective}
\min k.
\end{equation}

The Evolving Agent's task is to generate a feasible bin assignment in JSON
format. For each candidate packing, a deterministic verifier checks that every
item is assigned exactly once, no item is duplicated, and no bin exceeds the
capacity constraint. Candidate outputs with capacity violations, duplicated
items, or unassigned items are penalized by the evaluator.

\textbf{Evaluation.} Let $k_{\mathrm{alg}}$ denote the number of bins used by a
candidate solution and let $k_{\mathrm{opt}}$ denote the optimal number of bins
computed by an integer-programming solver. We define the score as the negative
extra-bin count:
\begin{equation}
\label{eq:bpp_score}
\mathrm{score}
=
-
\left(k_{\mathrm{alg}}-k_{\mathrm{opt}}\right).
\end{equation}
This again converts the minimization objective into a maximization score, where
larger values indicate fewer extra bins relative to the exact reference. In the
main text, we report the relative improvement from the initial policy to the
final policy. We additionally report constraint violation rate and packing
density as auxiliary diagnostics.

\subsection{Detailed Role Definition}
\label{role}
\subsubsection{Non-Stationary Contextual Bandit}

\textbf{Evolving Agent.}

The Evolving Agent is initialized with Llama-3.2-1B-Instruct and receives a system prompt defining it as an \textit{Evolving Agent in online learning and contextual bandits.} (see the adaptation details in Appendix \ref{prob_bandit:student}).  For each round, it is provided with the problem context and instructed to implement a Python function \texttt{policy(context, history, t, ...)} that selects an action index (to select arm) $a_t \in [0, K-1]$ which will be evaluated by the generated policy and returns the score.

\textbf{Role-Specialized Multi-LLM Training Supporters.}

The Exploration Supporter (gpt-oss:120b) analyzes the generated policy code. Specifically, the Sliding-Window LinUCB implementation dynamically suggests hyperparameter adjustments. For instance, while the initial window size is set to 50, the supporter may suggest increasing it (e.g./ to 100 or 200) during stable phases to capture longer-term dependencies, or reducing it to adapt more rapidly during high-drift phases. Similarly, it tunes the regularization parameter (e.g., from 1.0 to 0.1) and exploration scalar based on recent performance. In one instance, it flagged a policy with a large fixed window: ``The default sliding-window size ($\approx 50$) is too large for a rapidly drifting environment, so the ridge estimator incorporates stale data.'' It explicitly patched the code to: \texttt{if t == 0: policy.window\_size = 20 policy.lambda\_reg = 0.1} This reduced the estimator's "memory" to track the random walk $\theta_t$ more effectively. 

The Fine-Tuning Strategist (Qwen 3-32b) monitors training diagnostics during the periodic finetuning updates (every 20 rounds) to adjust DPO hyperparameters (score threshold, $\beta_{DPO}$, and epochs) to prevent the model from overfitting to stale bandit feedback. When it was observed that the student was generating repetitive code (low entropy), it issued a JSON command to set the DPO beta ($\beta_{DPO}$) to $0.1$ and adjust the score\_threshold to include a broader range of "good enough" examples, forcing the model to explore alternative ridge hyperparameters.

The Policy Inspector (llama4:latest) acts as a safety gate for EvoDPO, permitting reference policy updates only when the candidate $\pi^{kl}_t$ shows sufficient score improvement ($\Delta_t \geq -0.0007$) and satisfies a trust-region KL divergence constraint ($\widehat{\mathrm{KL}}_k(\pi^{kl}_k || \pi_{\text{ref}, k}) \leq 0.002$). The logs show it rejecting a candidate update that improved training regret but had a high KL divergence of $0.0045$ on the generated tokens, preventing a "catastrophic update" where the agent would have dissolved its stable baseline policy in favor of a high-variance experimental one.

\subsubsection{Physics-Informed Neural Network (Burgers' Equation)}

\textbf{Evolving Agent.}

The Evolving Agent is initialized with Llama-3.2-1B-Instruct and receives a system prompt defining it as an \textit{Evolving Agent in PDE-constrained optimization.} It is tasked with implementing a Python function \texttt{pinn\_loss\_strategy(residuals, ic\_err, bc\_err, extras)} that returns a scalar loss tensor. Note that the agent does not solve the optimization problem itself; instead, it defines the scalar objective function that the external optimizer (Adam) minimizes. The prompt explicitly encourages the agent to use adaptive weighting (based on \texttt{extras['epoch']}) and to address numerical stiffness (e.g., by re-weighting or normalizing residuals near shocks).

\textbf{Role-Specialized Multi-LLM Training Supporters.}

The Exploration Supporter (gpt-oss:120b) critiques the generated loss strategy code, identifying issues like scale imbalance between error terms and poor schedule design. For example, it detected that the PDE residual term ($\sim 10^{-2}$) often overshadowed boundary condition errors ($\sim 10^{-5}$), causing "boundary drift." In the logs, it explicitly recommended dynamic normalization and component-wise clipping (Huber loss) to fix this: \textit{``Replace global clamp with a Huber loss to prevent exploding gradients... Keep a running estimate of the standard deviation of each component and divide by it.''} This ensured the optimization respected physical boundaries at $x=\pm 1$.

The Fine-Tuning Strategist (Qwen 3-32b) dynamically tunes DPO hyperparameters during the periodic fine-tuning updates (every 20 rounds) to ensure the generator learns robust loss structures without collapsing to degenerate solutions. The logs show it actively adjusting $\beta_{DPO}$ (e.g., setting beta=0.7 then 0.5). While it occasionally suggests changes to num\_train\_epochs or score\_threshold (e.g., "Increasing beta and epochs for better exploration"), the system prioritizes its $\beta_{DPO}$ adjustments to stabilize convergence without inducing mode collapse.

The Policy Inspector (llama4:latest) enforced the improvement threshold. In one instance, it rejected an update where the candidate loss strategy minimized the PDE residual to $10^{-6}$ but caused the Initial Condition error to spike to $10^{-1}$ (a common mode collapse in PINNs known as the "trivial solution" where $u(t,x)=0$). The inspector flagged this as ``Regression: IC error spike'' and reverted the reference model, saving the training run from divergence.

\subsubsection{Traveling Salesperson Problem (TSP)}

\textbf{Evolving Agent.}

The Evolving Agent is initialized with Llama-3.2-1B-Instruct and receives a system prompt defining it as an \textit{Evolving Agent for combinatorial optimization}. For each TSP instance, the policy receives a node set and a distance matrix, and it generates a structured JSON output containing a candidate tour. The generated tour is then passed to a deterministic verifier, which checks whether the tour visits every node exactly once and returns to the starting node. The verifier also computes the tour length and the optimality gap relative to the exact-solver reference. Thus, the Evolving Agent is not rewarded for self-reported reasoning quality; it is scored only through executable validity and objective value.

\textbf{Role-Specialized Multi-LLM Training Supporters.}

The Exploration Supporter (gpt-oss:120b) analyzes generated TSP heuristics and proposes structural improvements to the candidate-generation strategy. For example, when the Evolving Agent repeatedly produced nearest-neighbor-style tours, the supporter encouraged additional local-improvement operations such as 2-opt swaps, edge-crossing removal, and route repair for duplicated or omitted nodes. These patches expand the candidate distribution beyond purely greedy construction and help prevent the policy from converging to a narrow heuristic family.

The Fine-Tuning Strategist (Qwen 3-32b) monitors the distribution of verifier scores and adjusts the DPO fine-tuning hyperparameters, including $\beta_{\mathrm{DPO}}$, score-margin thresholds, and training epochs. In TSP, small score differences can arise from minor edge substitutions, while invalid tours should be strongly separated from valid but suboptimal tours. The strategist therefore controls the score threshold used for preference-pair construction so that the DPO update emphasizes meaningful improvements in tour quality rather than noisy or near-equivalent route variations.

The Policy Inspector (llama4:latest) acts as the reference-promotion gate for candidate TSP policies. It accepts a proposed reference only when the candidate improves verifier-based tour quality while satisfying the trajectory-conditioned proxy-KL constraint. This is important because some candidate policies can improve a small set of TSP instances by making aggressive heuristic changes, but degrade validity or generalization on other instances. The inspector therefore prevents high-variance route-generation strategies from immediately replacing the stable reference policy.

\subsubsection{Bin Packing Problem}

\textbf{Evolving Agent.}

The Evolving Agent is initialized with Llama-3.2-1B-Instruct and receives a system prompt defining it as an \textit{Evolving Agent for discrete resource-allocation problems}. For each Bin Packing instance, the policy receives item weights and a bin capacity, and it generates a structured JSON output representing an assignment of items to bins. A deterministic verifier checks that every item is assigned exactly once, no item is duplicated, and no bin exceeds the capacity constraint. The verifier then computes the number of bins used, extra bins relative to the exact integer-programming reference, and packing density. Candidate preferences are induced from this verifier score rather than from model-generated judgments.

\textbf{Role-Specialized Multi-LLM Training Supporters.}

The Exploration Supporter (gpt-oss:120b) critiques generated packing heuristics and proposes structural modifications to improve feasibility and bin utilization. For example, when the Evolving Agent relied too heavily on a simple first-fit or first-fit-decreasing rule, the supporter suggested refinements such as residual-capacity tracking, best-fit insertion, repair of overloaded bins, and reassignment of small items to improve density. These modifications help the policy escape low-density packings and reduce unnecessary bin usage.

The Fine-Tuning Strategist (Qwen 3-32b) adjusts DPO hyperparameters based on verifier-score distributions across candidate packings. In Bin Packing, many candidate outputs may be feasible but differ only slightly in density, while infeasible outputs with capacity violations must be strongly penalized. The strategist therefore modulates $\beta_{\mathrm{DPO}}$ and score thresholds to emphasize preference pairs that separate feasible, compact packings from infeasible or inefficient assignments. This prevents the Evolving Agent from overfitting to weak preferences generated by nearly equivalent packings.

The Policy Inspector (llama4:latest) enforces conservative reference promotion for Bin Packing. Because small changes in assignment logic can produce large feasibility shifts, the inspector promotes a candidate reference only if it improves verifier-based packing quality while remaining within the proxy-KL budget. In practice, this makes the inspector a safeguard against unstable heuristic changes, such as policies that reduce the number of bins on some instances but increase capacity violations or unassigned-item errors on others.

\subsection{Detailed Agent Dynamics}
\label{app:agent_dynamics_details}

Table~\ref{tab:full_agent_dynamics} summarizes the intervention frequencies of the Fine-Tuning Strategist and the acceptance behavior of the Policy Inspector across 10 independent seeds. The intervention rates are computed over active fine-tuning rounds, i.e., rounds in which the exploration stage produced candidates that triggered a fine-tuning update. Overall, the Strategist intervenes frequently across all domains, but the type of intervention varies by task. Epoch adjustment is triggered in nearly all active fine-tuning rounds across the four domains, suggesting that the training horizon required continuous adaptation as candidate quality and score distributions changed. In contrast, the $\beta_{\mathrm{DPO}}$ and score-threshold interventions are more task-dependent: Bin Packing requires the most frequent $\beta_{\mathrm{DPO}}$ adjustment, while PINN and TSP require especially frequent threshold adjustment.

The Policy Inspector shows a different pattern. Bandit has the highest reference-promotion acceptance rate ($67.5\%$), indicating that candidate references often improved evaluator score while remaining within the proxy-KL budget. PINN has a moderate acceptance rate ($37.0\%$), consistent with smoother but still selective reference progression. By contrast, TSP and Bin Packing have much lower acceptance rates ($4.0\%$ and $7.7\%$, respectively), suggesting that combinatorial heuristic updates are more volatile: proposed candidate references may improve isolated instances but often fail the score-improvement and proxy-KL gate required for reference promotion. This behavior supports the intended role of the Policy Inspector as a conservative safeguard rather than an automatic reference-refresh mechanism.

\begin{table*}[h]
\centering
\caption{Detailed activity of the Fine-Tuning Strategist and Policy Inspector. Values indicate the fraction of active fine-tuning rounds where a specific intervention was triggered.}
\label{tab:full_agent_dynamics}
\setlength{\tabcolsep}{12pt}
\begin{small}
\begin{tabular}{lccccc}
\toprule
& & \multicolumn{3}{c}{\textbf{Strategist Intervention Rate}} & \textbf{Inspector} \\
\cmidrule(lr){3-5} \cmidrule(lr){6-6}
Task & seeds & $\Delta\beta_{\mathrm{DPO}}$ & $\Delta$ Threshold & $\Delta$ Epoch & Acceptance Rate \\
\midrule
\textit{Bandit} & 10 & \textbf{67.5\%} & \textbf{49.1\%} & \textbf{99.1\%} & \textbf{67.5\%} \\
\midrule
\textit{PINN} & 10 & \textbf{49.5\%} & \textbf{88.7\%} & \textbf{99.0\%} & \textbf{37.0\%} \\
\midrule
\textit{TSP} & 10 & \textbf{40.7\%} & \textbf{91.9\%} & \textbf{97.2\%} & \textbf{4.0\%} \\
\midrule
\textit{Bin Packing} & 10 & \textbf{92.0\%} & \textbf{35.9\%} & \textbf{95.0\%} & \textbf{7.7\%} \\
\bottomrule
\end{tabular}
\end{small}
\end{table*}

Table~\ref{tab:full_agent_dynamics} also highlights that ATLAS does not apply the same control policy uniformly across domains. In the Bandit task, both $\beta_{\mathrm{DPO}}$ adjustment and reference promotion are frequent, reflecting the need to track a drifting reward model while still allowing stable reference progression. In PINN, the high threshold-adjustment rate suggests that preference-pair selection must be actively controlled as the optimization landscape becomes stiff. In TSP and Bin Packing, the low acceptance rates indicate that the inspector rejects most candidate references, which is expected when discrete heuristic changes cause high-variance performance shifts across instances.

The number of active fine-tuning rounds may vary across seeds because ATLAS skips expensive fine-tuning when the exploration phase fails to produce candidates with sufficient evaluator-score improvement. This mechanism reduces unnecessary updates and prevents the preference dataset from being dominated by low-quality or uninformative candidate pairs. Consequently, the reported intervention rates should be interpreted as conditional rates over active fine-tuning rounds rather than over all exploration rounds.

\subsection{Hyperparameters, Hardware, and Timing}
\label{appendix:param}
\textbf{Hyperparameters.} Lists the hyperparameters used for training, and task-specific configurations are summarized in Table \ref{tab:hyperparams}. 
\begin{table}[h] 
\centering 
\begin{tabular}{lll} 
\toprule 
\textbf{Category} & \textbf{Parameter} & \textbf{Value} \\
                 \midrule 
General          & Total Rounds & $501$ (Bandit); $201$ (PINN) \\ 
                 & Number of Seeds & $10$ (Independent Runs) \\
                 & Sampling Strategy & Continuous ($K=20$ rounds) \\
                 & Number of Islands & $6$ \\
                 \midrule 
DPO Training     & Finetuning Frequency & Every $20$ rounds \\ 
                 & Initial Learning Rate & $5 \times 10^{-6}$ \\
                 & Initial Beta ($\beta_{DPO}$) & $0.6$ \\
                 & Initial Epochs per Update & $1$ \\
                 & LoRA Rank / Alpha & $r=64, \alpha=32$ \\
                 & KL Thresholds ($\delta_{High}$) & $0.002$ \\
                 & Score improvement tolerance ($\epsilon_s$) & $0.0007$ \\
                 \midrule
LLM Temperature  & Llama-3.2-1B-Instruct & 0.9 \\
                 & Gpt-oss: 120b & 0.7 \\
                 & Qwen 3-32b & 0.7 \\
                 & Llama4:latest & 0.7 \\
                 \midrule 
Bandit Task      & Number of arms ($k$) & 5 \\
                 & Dimensions ($d$) & 5 \\ 
                 & Horizon ($H$) & 2000 \\ 
                 & Total variation budget ($V_T$) & $8000$ \\ 
                 & Drift limits ($\delta_{min}, \delta_{max}$) & $1.0, 5.0$ \\
                 \midrule
PINN Task        & Initial Viscosity ($\nu$) & $0.01/\pi$ \\
                 & Sampling ($N_{coll}, N_{ic}, N_{bc}$) & $512, 64, 64$ \\
                 & Inner Optimizer & Adam \\
                 & Inner Learning Rate & $1 \times 10^{-4}$ \\
                 & Inner Optimization Steps & $500$ \\
                 \midrule
TSP Task         & Instance Generation & Uniform nodes in $[0,1]^2$ \\
                 & Problem Sizes ($n$) & $10, 20, 50, 100$ nodes \\
                 & Distance Metric & Euclidean distance \\
                 & Output Format & JSON tour \\
                 & Maximum Refinement Rounds & $5$ \\
                 & Exact Evaluation Baseline & Exact TSP solver \\
                 & Primary Metric & Relative improvement \\
                 \midrule
Bin Packing Task & Instance Generation & Random item weights \\
                 & Problem Sizes ($n$) & $10, 50, 100$ items \\
                 & Bin Capacity ($C$) & Fixed per instance \\
                 & Output Format & JSON bin assignment \\
                 & Maximum Refinement Rounds & $5$ \\
                 & Exact Evaluation Baseline & Integer-programming solver \\
                 & Primary Metric & Relative improvement \\
                 \bottomrule
\end{tabular} 
\caption{Hyperparameters for Training, DPO, and Tasks.} 
\label{tab:hyperparams} 
\end{table}

\textbf{Hardware.} For benchmarking each task, we were using 
8 Nvidia l40s GPUs with 20 GB of RAM. From this cluster, we specifically allocated our Evolving Agent (Llama-3.2-1B-Instruct), which used 1 Nvidia l40s GPU with 20 GB of RAM and 12 CPU cores.

\textbf{Timing.} A full run of ATLAS, EvoDPO, and evoTune takes approximately 2 hours per seed for the Bandit task and approximately 1 hour per seed for the PINN task with the provided hardware environment. The runtime for other methods and/or benchmarks falls within a similar range.

\paragraph{External assets, models, and solvers.}
All experiments use publicly described or API-accessed LLM backbones and standard optimization or evaluation software. We do not redistribute model weights, third-party solver code, or proprietary API assets. The LLM backbones used in our experiments are Llama-3.2-1B-Instruct for the Evolving Agent, gpt-oss-120B for the Exploration Supporter, Qwen 3-32b for the Fine-Tuning Strategist, and Llama4-latest for the Policy Inspector. For combinatorial evaluation, exact or integer-programming solvers are used only to compute reference objective values and feasibility checks; they do not provide candidate solutions to ATLAS. These external tools are used under their respective access terms and are listed here solely for reproducibility.


\subsection{KL Proxy and Limitations}
\label{app:klproxy}
The gate uses a trajectory-conditioned token-level KL proxy (Eq.~(3)), computed from next-token distributions under teacher forcing.
This proxy is efficient and targets local drift in autoregressive behavior, but it is not equal to unconditional sequence-level KL and can exhibit variance depending on the chosen trajectory and the sampled gate subset $G_k$.

\subsection{Selection Adaptivity}
\label{app:robust}
Because candidates and scores are generated on-policy, gate decisions can be adaptive to the current phase buffer. We mitigate this by evaluating the gate on a fresh randomly sampled subset $G_k$ each phase, reducing reuse of the same examples for candidate
selection and acceptance decisions.

\subsection{Implementation Details}

\begin{table}[!ht]
\centering
\caption{Key software libraries and versions.}
\label{tab:software}
\begin{small}
\begin{tabular}{ll}
\toprule
\textbf{Category} & \textbf{Libraries \& Versions} \\
\midrule
\textbf{Core Frameworks} & Python 3.10 \\
& PyTorch 2.8.0 \\
& CUDA 12.6.0 \\
& JAX 0.4.30 \\
\midrule
\textbf{LLM \& Optimization} & Transformers 4.57.3 \\
& TRL (Transformer Reinforcement Learning) 0.24.0 \\
& PEFT 0.17.1 \\
& Accelerate 1.10.1 \\
& Tokenizers 0.22.1 \\
& OpenAI 2.8.1 \\
\midrule
\textbf{Scientific Computing} & NumPy 1.26.4 \\
& SciPy 1.13.1 \\
& SymPy 1.14.0 \\
& Pandas 2.3.3 \\
\bottomrule
\end{tabular}
\end{small}
\end{table}
Our experiments were implemented using PyTorch and Hugging Face Transformers for LLM training. Key libraries and their versions are listed in Table \ref{tab:software}.
\FloatBarrier

\section{Limitations, Broader Impact, and Ethics}

\paragraph{Limitations.}
Our theoretical analysis is intentionally stylized and isolates adaptive reference progression in a non-stationary preference contextual bandit. It does not provide an end-to-end guarantee for full sequence-level LLM fine-tuning, and it does not directly model the finite checkpoint candidate set, the empirical token-level KL proxy, or the discrete accept/reject behavior of the inspector gate. Empirically, our experiments focus on four executable optimization domains: non-stationary contextual bandits, PINN loss reweighting, TSP, and Bin Packing. Although these tasks cover decision-making, scientific machine learning, and combinatorial optimization, they do not exhaustively represent all open-ended scientific-discovery settings.

\paragraph{Broader impact.}
ATLAS is intended to improve evaluator-driven self-improvement for executable optimization tasks, including scientific computing and algorithmic search. Potential positive impacts include reducing manual effort in designing optimization heuristics, stabilizing long-horizon preference-based training, and improving scientific-computing workflows where evaluator feedback is available. Potential negative impacts include over-reliance on automatically generated code, propagation of evaluator misspecification, generation of incorrect or inefficient programs, and misuse of autonomous code-improvement pipelines in settings where generated code could affect safety-critical systems. We mitigate these risks in our experiments by using task-specific executable evaluators, fixed external validation metrics for PINN evaluation, and a conservative KL-based reference-promotion gate. The proposed system should not be deployed in safety-critical settings without human review, stronger sandboxing, and domain-specific validation.

\paragraph{Ethics and data.}
The experiments do not involve human subjects, private personal data, crowdsourcing, or scraped user datasets. All preference labels are induced by executable task evaluators rather than human annotators. The work uses existing LLMs and software frameworks; their licenses and access terms should be respected when reproducing or extending the system.

\section{Prompt Templates for Code Generation}
\label{prompt}
This section provides templates for the given prompts for LLM agents. All tasks use the same Fine-Tuning Strategist and Policy Inspector prompt, whereas Exploration Supporter and Evolving Agent use suitable prompts for each task.

\subsection{Problem Description: Non-stationary Contextual Bandit}

\begin{lstlisting}[style=promptstyle, literate={_}{\_}1 {π}{{$\pi$}}1 {θ}{{$\theta$}}1 {∈}{{$\in$}}1]
Non-stationary linear contextual bandit: K arms, d-dim contexts, drifting θ(t).
At each t, you observe context matrix context ∈ R^{Kxd} (one row per arm).
Reward is generated by the evaluator: r_t = context[a_t] * θ(t) + noise.

Goal: minimize regret. You only choose a_t using history (no access to θ or expected rewards).

The evaluator maintains a sliding-window ridge estimator:
  A_t = sum_{tau in W_t} phi_tau phi_tau^T + lambda I
  b_t = sum_{tau in W_t} h_tau phi_tau
and exposes it to your policy as:
  - policy.A (dxd), policy.b (d,)
Your policy must tune hyperparameters by setting:
  - policy.lambda_reg  (ridge regularizer, float > 0)
  - policy.window_size (sliding window size W, int >= 1)
\end{lstlisting}

\subsection{Problem Description: PINN Burgers}
\label{prob_pinn} 
\begin{lstlisting}[style=promptstyle,literate={_}{\_}1{:}{:}1]
We are evolving training loss functions for a physics-informed neural network (PINN) that solves the 1D viscous Burgers equation: u_t + u * u_x - nu * u_xx = 0,  (t, x) in [0, 1] x [-1, 1] with:
  - Initial condition:  u(0, x) = -sin(pi * x)
  - Boundary conditions: u(t, -1) = 0, u(t, 1) = 0

The candidate function pinn_loss_strategy(residuals, ic_err, bc_err, extras) receives:
  - residuals: squared PDE residuals at N_coll collocation points
  - ic_err : squared initial-condition errors at N_ic points
  - bc_err : squared boundary-condition errors at 2*N_bc points
  - extras : metadata such as 'epoch' and 'nu'

The candidate function must combine these quantities into a scalar training loss tensor used only for inner PINN optimization.

Candidate-generated losses are not used directly as the evaluation score. After training a PINN with the candidate loss, an external fixed evaluator computes validation error using evaluator-defined terms:
  - normalized PDE residual error,
  - initial-condition error,
  - boundary-condition error,
  - solution L2 error against a high-resolution reference solution.

The scalar score is the negative of this fixed evaluator error:
  score = -fixed_evaluator_error

Thus:
  - Lower fixed evaluator error gives a larger score.
  - Higher fixed evaluator error gives a smaller score.

The goal is to design training loss strategies that lead to low fixed-evaluator error after PINN optimization. The generated loss may shape training, but all preference labels and final metrics are computed using the fixed external evaluator.
\end{lstlisting}

\subsection{Problem Description: Traveling Salesperson Problem}
\label{prob_tsp}

\begin{lstlisting}[style=promptstyle,literate={_}{\_}1]
Traveling Salesperson Problem (TSP): You are given n nodes and a distance matrix D.
D[i][j] is the distance from node i to node j.

Goal:
Find a valid tour that:
  - starts at node 0,
  - visits every node exactly once,
  - returns to node 0,
  - minimizes the total tour length.

A candidate tour is evaluated by an external verifier. The verifier checks:
  - whether every node is visited exactly once,
  - whether the tour starts and ends at node 0,
  - whether the output is valid JSON,
  - the total tour length.

The score is based on the tour length and feasibility. Invalid tours are penalized.
The model must output a machine-readable JSON object so the verifier can parse it.
\end{lstlisting}

\subsection{Problem Description: Bin Packing}
\label{prob_binpacking}

\begin{lstlisting}[style=promptstyle,literate={_}{\_}1]
Bin Packing Problem (BPP): You are given n items with positive weights and a fixed bin capacity C.

Goal:
Assign every item to a bin while:
  - using as few bins as possible,
  - never exceeding the capacity C in any bin,
  - assigning every item exactly once,
  - avoiding duplicated or missing items.

A candidate packing is evaluated by an external verifier. The verifier checks:
  - whether every item is assigned exactly once,
  - whether any bin exceeds capacity,
  - whether the output is valid JSON,
  - the number of bins used,
  - the packing density.

The score is based on feasibility and the number of bins used. Invalid packings are penalized.
The model must output a machine-readable JSON object so the verifier can parse it.
\end{lstlisting}

\subsection{Evolving Agent: Non-stationary Contextual Bandit}

\label{prob_bandit:student} 
\begin{lstlisting}[style=promptstyle,literate={_}{\_}1]
You are an Evolving Agent in online learning and contextual bandits.
TASK: Write a POLICY function for a Non-Stationary Linear Contextual Bandit.
Define exactly ONE function:
    def policy(context, history_contexts, history_actions, history_rewards, t, rnd) -> int
The evaluator (not you) runs the environment and scoring.

RULES:
1. Use ONLY `import numpy as np` (never `import random`)
2. Use ONLY the provided `rnd` for randomness (do not create your own RandomState)
3. Do NOT reference global K/d. Always derive them as: K, d = context.shape
4. Return a single int arm index in [0, K-1]
5. history_contexts/history_actions/history_rewards are NumPy arrays (read-only inputs). Do NOT call .append() on them.
6. You MUST return an int on ALL code paths (no missing return / never return None)
8. You tune hyperparameters using feedback from teacher:
     policy.window_size (int >= 1)
     policy.lambda_reg  (float > 0)
     policy.ucb_alpha   (float >= 0)
9. You must NOT simulate $\theta$, rewards, contexts, or regret (evaluator does this).
10. Do NOT import sklearn, scipy, torch, pandas, or any external library. Only `import numpy as np` is allowed.
11. No markdown, only a single function definition.

Define exactly ONE function:
    def policy(context, history_contexts, history_actions, history_rewards, t, rnd) -> int
    
PROBLEM SETUP (Evaluator-owned)
At each time step t:
  1) You observe context $\in R^{K\times d}$ (one row per arm).
  2) You choose an arm a_t using ONLY history:
       history_contexts[:t], history_actions[:t], history_rewards[:t].
       IMPORTANT: history_* are NumPy arrays; do NOT modify them and do NOT use .append(). Use slicing/indexing only.
  3) The evaluator generates reward and regret (you never see $\theta$ or expected rewards).
  
The evaluator maintains ridge regression state and exposes it to you as:
  - policy.A (d x d matrix)
  - policy.b (d vector)
Your policy can compute:
  theta_hat = np.linalg.solve(policy.A, policy.b)
  
Return ONLY the function definition. No test code.
\end{lstlisting}

\subsection{Evolving Agent: PINN Burger's equation}
\label{prob_pinn:student} 
\begin{lstlisting}[style=promptstyle]
You must define exactly one function:
    def pinn_loss_strategy(residuals, ic_err, bc_err, extras):
Arguments:
- residuals: 1D PyTorch tensor of shape [N_coll],
             containing squared PDE residuals at collocation points.
- ic_err   : 1D PyTorch tensor of shape [N_ic],
             containing squared initial condition errors.
- bc_err   : 1D PyTorch tensor of shape [2*N_bc],
             containing squared boundary condition errors (left and right).
- extras   : dictionary with metadata (e.g., 'epoch', 'nu', etc.).
Return:
- A scalar PyTorch tensor (a single number in a torch.Tensor).
- DO NOT return a function or a callable object.
- Example: return residuals.mean() + ic_err.mean()  -> CORRECT
- Example: return lambda: residuals.mean()  -> WRONG (returns a function)
- Example: DO NOT define inner functions and return them.

Constraints:
- Use ONLY PyTorch tensor operations (no NumPy inside the function).
- The loss MUST be a scalar tensor (0-dimensional).
- Ensure all loss weights (lambdas) are between 0.001 and 5.0 to avoid ignoring constraints or gradient explosion.
- Avoid in-place operations that could break autograd.
- Use simple operations: .mean(), .sum(), elementwise arithmetic,
  possible time-dependent weights based on extras['epoch'], etc.
- Keep the computation O(N) in the number of samples.

Goal:
- Design a loss that leads the PINN to satisfy the Burgers PDE as well as
  the initial and boundary conditions. You may:
  - Re-weight PDE vs IC vs BC errors.
  - Use adaptive weights that change with epoch.
  - Emphasize harder points (e.g., near shocks) by using powers
    or normalized residuals.

Return JUST the function definition. Do not include training code.
\end{lstlisting}

\subsection{Evolving Agent: Traveling Salesperson Problem}
\label{prob_tsp:student}

\begin{lstlisting}[style=promptstyle,literate={_}{\_}1]
You are an Evolving Agent for combinatorial optimization.

TASK:
Generate a valid Traveling Salesperson Problem (TSP) tour.

You are given:
- n: number of nodes
- distance_matrix: a 2D list or array where distance_matrix[i][j] is the distance from node i to node j

Output:
Return ONLY a valid JSON object with the following structure:
{
  "tour": [0, ..., 0]
}

Rules:
1. The tour MUST start at node 0 and end at node 0.
2. Every node from 0 to n-1 MUST appear exactly once before returning to node 0.
3. Do NOT duplicate nodes except for the final return to node 0.
4. Do NOT omit any node.
5. Do NOT include nodes outside the range [0, n-1].
6. The output MUST be valid JSON.
7. Do NOT include markdown, comments, or extra text outside the JSON object.
8. The evaluator will compute the total tour length. Do NOT self-report the score.

Goal:
Produce a feasible tour with small total distance.
You may use constructive heuristics such as nearest neighbor, insertion, or local repair,
but the final answer must be only the JSON object.
\end{lstlisting}

\subsection{Evolving Agent: Bin Packing}
\label{prob_binpacking:student}

\begin{lstlisting}[style=promptstyle,literate={_}{\_}1]
You are an Evolving Agent for discrete resource allocation.

TASK:
Generate a feasible bin packing assignment.

You are given:
- capacity: maximum capacity of each bin
- item_weights: a list of positive item weights

Output:
Return ONLY a valid JSON object with the following structure:
{
  "bins": [[...], [...], ...]
}

Each inner list contains item indices assigned to that bin.

Rules:
1. Every item index from 0 to len(item_weights)-1 MUST appear exactly once.
2. Do NOT duplicate any item.
3. Do NOT omit any item.
4. The total weight in each bin MUST be less than or equal to capacity.
5. Do NOT include item indices outside the valid range.
6. The output MUST be valid JSON.
7. Do NOT include markdown, comments, or extra text outside the JSON object.
8. The evaluator will compute feasibility, number of bins, and packing density. Do NOT self-report the score.

Goal:
Use as few bins as possible while satisfying all capacity constraints.
You may use constructive heuristics such as first-fit decreasing, best-fit decreasing,
or local repair, but the final answer must be only the JSON object.
\end{lstlisting}

\subsection{Exploration Supporter: Non-stationary Contextual Bandit}
\label{prob_bandit:gptoss} 
\begin{lstlisting}[style=promptstyle]
You are a bandit hyperparameter tuner. Give BRIEF, CODE-READY feedback.

Rules:
- Keep feedback under 120 words total
- Use numpy only, keep it simple
- You must change assignments to:
    policy.lambda_reg and policy.window_size
- Do NOT add UCB bonuses, refactor logic, or simulate the environment.
- Put your fix inside an 'if t == 0' block.

Format (STRICT):
Issue: One sentence describing the main problem.
- Fix: Hyperparameter assignments only
- Why: One sentence explaining why this helps.

\end{lstlisting}

\subsection{Exploration Supporter: PINN Burger's equation}
\label{prob_pinn:gptoss} 
\begin{lstlisting}[style=promptstyle]
You are an Evolving Agent in physics-informed neural networks (PINNs), PDE-constrained optimization, and training stability for stiff nonlinear PDEs (especially 1D viscous Burgers).

The student is evolving loss functions for a Burgers PINN of the form u_t + uu_x - vu_xx = 0 on a bounded space-time domain with initial and boundary conditions.

You will receive:
- The candidate loss function (Python)
- Brief training diagnostics (e.g., residual plots, loss curves, or short logs).

Your job is to act as a strict but constructive teacher focused ONLY on the loss design and its training dynamics - do NOT propose architectural changes (no new networks, no new optimizers, etc.).

You can adjust only:
- Loss terms and their weights,
- Normalization / scaling strategies,
- Sampling strategies in the loss (e.g., where to place collocation points),
- Simple curriculum or annealing schedules for the loss weights.
\end{lstlisting}

\subsection{Exploration Supporter: Traveling Salesperson Problem}
\label{prob_tsp:gptoss}

\begin{lstlisting}[style=promptstyle,literate={_}{\_}1]
You are a combinatorial-optimization feedback agent for TSP candidate programs.

You will receive:
- A generated TSP solution strategy or candidate output
- Verifier feedback such as invalid nodes, duplicated nodes, omitted nodes, tour length, or optimality gap

Your job:
Give brief, code-ready feedback that improves feasibility and tour quality.

Focus on:
- Repairing duplicated or omitted nodes
- Ensuring the tour starts and ends at node 0
- Reducing tour length using simple local improvements
- Suggesting nearest-neighbor, insertion, 2-opt, or edge-crossing repair when appropriate

Rules:
- Keep feedback concise and actionable
- Do NOT claim a score without verifier output
- Do NOT propose external libraries
- Do NOT change the required JSON output format

Format:
Issue: one sentence describing the main problem.
Fix: one or two concrete changes to the heuristic or output construction.
Why: one sentence explaining why this improves feasibility or tour quality.
\end{lstlisting}

\subsection{Exploration Supporter: Bin Packing}
\label{prob_binpacking:gptoss}

\begin{lstlisting}[style=promptstyle,literate={_}{\_}1]
You are a combinatorial-optimization feedback agent for Bin Packing candidate programs.

You will receive:
- A generated bin-packing strategy or candidate output
- Verifier feedback such as capacity violations, duplicated items, omitted items, bin count, or packing density

Your job:
Give brief, code-ready feedback that improves feasibility and bin utilization.

Focus on:
- Assigning every item exactly once
- Repairing overloaded bins
- Reducing the number of bins
- Improving packing density using first-fit decreasing, best-fit decreasing, residual-capacity tracking, or local reassignment

Rules:
- Keep feedback concise and actionable
- Do NOT claim a score without verifier output
- Do NOT propose external libraries
- Do NOT change the required JSON output format

Format:
Issue: one sentence describing the main problem.
Fix: one or two concrete changes to the packing heuristic or repair rule.
Why: one sentence explaining why this improves feasibility or bin utilization.
\end{lstlisting}

\subsection{Fine-Tuning Strategist}
\label{advisor:qwen} 
\begin{lstlisting}[style=promptstyle]
You are a careful training-strategy advisor for a self-evolving code agent using
Direct Preference Optimization (DPO).

You receive a small JSON blob describing the current state of training
(round number, score thresholds, dataset sizes, etc.).
You must respond with a SMALL JSON object suggesting hyperparameter tweaks.

Constraints:
- You may suggest new values for:
    - "score_threshold": float
    - "beta": float
    - "num_train_epochs": int
- Only change a parameter if there is a clear benefit.
  Otherwise, OMIT that key and leave it unchanged.
- Optionally include a short comment string explaining your reasoning.

Return only a JSON object, no extra text.
\end{lstlisting}
\lstdefinestyle{prompt_top}{
    style=promptstyle,
    frame=top|left|right, 
    belowskip=0pt         
}

\lstdefinestyle{prompt_bottom}{
    style=promptstyle,
    frame=bottom|left|right, 
    aboveskip=0pt            
}
\subsection{Policy Inspector}
\label{advisor:llama4} 
\begin{lstlisting}[style=prompt_bottom, literate={_}{\_}1 {π}{{$\pi$}}1 {θ}{{$\theta$}}1 {∈}{{$\in$}}1 {δ}{{$\delta$}}1]
Policy inspector for Evolving DPO reference update.

CONTEXT:
- The candidate policy π^kl_t was trained via DPO with loss L_DPO(π; π_ref,t)
- Per-step KL divergence D(π^kl_t || π_ref,t) has ALREADY been checked algorithmically
- The kl_divergence field shows how much the policy drifted from the previous reference
- If KL exceeded the threshold δ, this inspector would NOT have been called (auto-reject)

Decision rule (based on improvement signal KL divergence D(π^kl_t || π_ref,t)):
- If improvement_signal >= epsilon and improvement_signal <= KL_THRESHOLD_HIGH: ACCEPT
- If improvement_signal < epsilon or improvement_signal > KL_THRESHOLD_HIGH: REJECT

DEFAULT: ACCEPT. The KL divergence check has already passed.

Output: JSON only, no other text. {"accept": true, "reason": "brief reason"}
\end{lstlisting}

\end{document}